\documentclass[10pt,journal,compsoc]{IEEEtran}
\usepackage{amsmath,amsfonts}
\usepackage{algorithmic}
\usepackage{array}
\usepackage[caption=false,font=normalsize,labelfont=sf,textfont=sf]{subfig}
\usepackage{textcomp}
\usepackage{stfloats}
\usepackage{url}
\usepackage{verbatim}
\usepackage{graphicx}
\usepackage{subcaption}
\def\BibTeX{{\rm B\kern-.05em{\sc i\kern-.025em b}\kern-.08em
    T\kern-.1667em\lower.7ex\hbox{E}\kern-.125emX}}
\usepackage{balance}
\usepackage{xcolor}
\usepackage{pifont}
\usepackage{adjustbox}
\usepackage{diagbox}
\usepackage{multirow}
\usepackage{makecell}
\usepackage{amssymb}
\usepackage{amsthm}
\theoremstyle{plain}
\newtheorem{theorem}{Theorem}

\theoremstyle{remark}

\theoremstyle{definition}

\definecolor{myred}{RGB}{255,0,0}

\usepackage{amsmath,amsfonts,bm}

\def\eqref#1{equation~\ref{#1}}

\def\1{\bm{1}}

\DeclareMathAlphabet{\mathsfit}{\encodingdefault}{\sfdefault}{m}{sl}
\SetMathAlphabet{\mathsfit}{bold}{\encodingdefault}{\sfdefault}{bx}{n}

\usepackage{colortbl}
\usepackage{xcolor}
\definecolor{mygray}{gray}{.9}
\definecolor{goldenrod}{RGB}{245,245,220}
\newlength\savewidth
\newcolumntype{a}{>{\columncolor{mygray}}c}
\usepackage{fontawesome5}
\usepackage{booktabs}
\usepackage{makecell}
\usepackage{fontawesome5}
\usepackage{lipsum}
\usepackage{comment}
\usepackage{multirow}
\usepackage{wrapfig}
\usepackage{algorithm}
\usepackage{algorithmic}
\usepackage{xfrac}  

\usepackage{graphicx}
\usepackage{color}

\usepackage{amsthm}
\definecolor{darkgreen}{rgb}{0,0.7,0}

\definecolor{mygraytext}{gray}{.75}

\begin{document}

\title{MoASE++: Mixture of Activation Sparsity Experts with Domain-Adaptive On-policy Distillation for Continual Test Time Adaptation}

\author{Ronyu Zhang, Aosong Cheng, Gaole Dai, Yulin Luo, Jiaming Liu,\\ Li Du, Huanrui Yang, Dan Wang, Leyuan Fang, Yuan Du, Shanghang Zhang
 \IEEEcompsocitemizethanks{
    \vspace{-1mm}
    \IEEEcompsocthanksitem This manuscript is an extended version of AAAI-26 oral paper~\cite{zhang2026decomposing} (Proc. AAAI, 40(42):36057–36065, 2026; doi:10.1609/aaai.v40i42.40922). \protect\\
}
 \IEEEcompsocitemizethanks{
    \vspace{-1mm}
    \IEEEcompsocthanksitem Rongyu Zhang is with Nanjing University and The Hong Kong Polytechnic University. Aosong Cheng, Gaole Dai, Jiaming Liu, Yulin Luo, and Shanghang Zhang are with the State Key Laboratory of Multimedia Information Processing, School of Computer Science, Peking University. Huanrui Yang is with the University of Arizona. Dan Wang is with the Hong Kong University of Science and Technology. Leyuan Fang is with the School of Artificial Intelligence and Robotics, Hunan University. Yuan Du and Li Du are with Nanjing University. \protect\\
}
}

\markboth{Journal of \LaTeX\ Class Files,~Vol.~14, No.~8, August~2021}%
{Shell \MakeLowercase{\textit{et al.}}: A Sample Article Using IEEEtran.cls for IEEE Journals}

\IEEEtitleabstractindextext{
\begin{abstract}
Continual test-time adaptation adapts a source-pretrained model to non-stationary, unlabeled target streams while retaining past competence, yet texture-biased backbones risk error accumulation and catastrophic forgetting. Drawing inspiration from the process of decoupling shape and texture in the human visual system, we introduce MoASE, a plug-in mixture-of-experts that disentangles domain-agnostic structure from domain-specific texture using Activation Sparsity Experts with Spatial Differentiable Dropout, forming complementary high- and low-activation pathways, while high- and low-rank bottlenecks diversify representations. The Activation Sparsity Gate produces input-adaptive SDD thresholds for precise token selection, and the Domain-Aware Router assigns per-sample expert weights using texture-sensitive cues. To curb confirmation bias on unlabeled streams and stabilize supervision, we then introduce Domain-Adaptive On-Policy Distillation to constitute MoASE++, with an EMA-anchored on-policy reverse KL distillation and an augmentation policy conditioned on entropy and confidence that aligns predictions across the same views and improves the robustness-plasticity balance. Extensive experiments on classification (CIFAR-10/100-C, ImageNet-C) and semantic segmentation (Cityscapes→ACDC) demonstrate consistent state-of-the-art performance, offering a principled, controllable approach to continual adaptation in dynamic visual environments. Code is available on: \url{https://github.com/RoyZry98/MoASE-Pytorch}.
\end{abstract}

\begin{IEEEkeywords}
activation sparsification, mixture-of-experts, continual test time adaptation, on-policy distillation
\end{IEEEkeywords}
}

\maketitle

\begin{figure*}[t]
\centering
\includegraphics[width=0.99\linewidth]{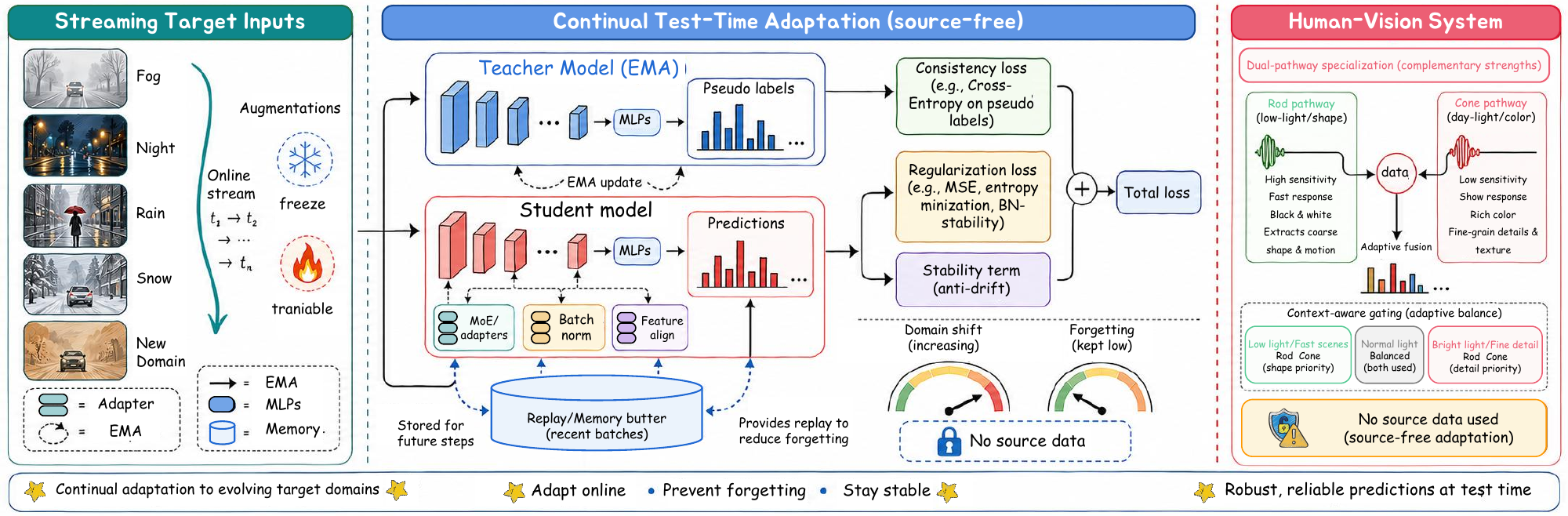}
\caption{\textbf{The problem and motivation.} Our goal is to adapt pre-trained models to evolving target domains.}
\label{fig: teaser}
\end{figure*}

\section{Introduction}
With the emergence of deep-learning-based methods in autonomous driving~\cite{zhao2025survey,zhang2024bevuda++,yu2026identifying} and robotics~\cite{liu2026last,li2023manipllm,ma2026survey}, the continuously changing test-time scenarios in these applications raise significant challenges for stationary machine perception systems~\cite{zhang2024efficient,han2020occuseg,tian2024occ3d}, which anticipates that the test-time data distribution always mirrors that of the training data, resulting in severe error accumulation and catastrophic forgetting. To address this issue, Continual Test-Time Adaptation (CTTA)~\cite{wang2022continual,tian2026dual} has been proposed, which moves beyond the conventional setting of Test-Time Adaptation (TTA)~\cite{wang2020tent,liang2023comprehensive,han2026dota} that handles a single shift to manage a sequence of distribution shifts over time. Current CTTA approaches~\cite{ni2025maintaining,lee2024becotta,liu2023adaptive} predominantly utilize a teacher-student framework to concurrently extract domain knowledge in target domains by generating pseudo-labels. However, such methods frequently struggle to identify and differentiate domain-specific and domain-agnostic features using \texttt{implicit}~\cite{ni2025maintaining,liu2023vida} self-training, visual prompts, and high- and low-rank adapters, which lack interpretability and controllability during training. Moreover, in contrast to implicit models, vision science reveals that the human visual system employs a clearly defined, explicit mechanism with an absolute threshold~\cite{barlow1956retinal,koenig2011absolute} to process visual signals. Therefore, we aim to explore solutions for CTTA tasks from an \texttt{explicit} perspective to decompose feature representations for better perception.

As shown in the right red part of the Fig.~\ref{fig: teaser}. In the human retina, different types of cone cells~\cite{mustafi2009structure} are sensitive to different wavelengths of light (i.e., trichromatic vision). These photoreceptor cells are densely packed in the fovea~\cite{bringmann2018primate} and are sensitive to both texture and hue details, which is essential for fine-grained vision tasks. Another distinct cell type is the rod cell, which is widely distributed across the retina and is responsible for rapid shape and dynamic recognition, although it can only detect light intensity~\cite{von1867handbuch}. Similarly, deep neural networks (DNNs) exhibit parallel characteristics, as shown in previous work \cite{li2024emergence,zhang2024multi,yang2020fda,xu2021fourier}: strongly activated neurons encode domain-agnostic shape and structure, while weakly activated neurons encode domain-specific texture and style. Therefore, we can delineate the conceptual associations between \textbf{\{low/high activation\}-\{domain specific/agnostic\}-\{texture/shape\}.} Drawing inspiration from the human visual system and the prior research, we propose an intriguing hypothesis: $\spadesuit$ \textbf{Can we explicitly decompose neurons by activation degree to encode shapes and textures separately for better model perception to differentiate the continuously changing environments?}

To demystify the role of activation sparsity in CTTA, we manually decompose strongly and weakly activated neurons in a pretrained model and visualize their responses to inputs from varying domains in Fig.~\ref{fig:cam}. We observe a clear distinction in the neuron attention, where \textbf{strongly activated neurons} focus on \textbf{domain-agnostic foreground features} relating to the main object, while \textbf{weakly activated neurons} attend to \textbf{domain-specific background features} of styles and noises. 
This motivates our study on the \texttt{explicit} decomposition of neural activation with a sparse Mixture-of-Experts (MoE)~\cite{jacobs1991adaptive,masoudnia2014mixture,liu2026survey} architecture to mimic the decoupled functionality of cone and rod cells in the Retina~\cite{field2005retinal}. We propose the Mixture-of-Activation-Sparsity-Experts (MoASE), an adapter integrated into pre-trained models that features a Spatial Differentiable Dropout (SDD) mechanism. This setup enhances the extraction of domain-agnostic object shapes and structures and identifies domain-specific textures from a spatial perspective, such as weather-related noise from fog and snow~\cite{zhang2024efficient} through various specialized expert modules.


Moreover, we developed a multi-gate module to enhance dynamic perception in the fluctuating CTTA, comprising the Domain-Aware Router (DAR), which provides domain-specific information to experts, and the Activation Sparsity Gate (ASG), which adjusts the threshold for each expert's activation selection. Concretely, DAR leverages texture-sensitive, low-activation cues to compute dense, per-sample routing weights, enabling soft collaboration among experts rather than brittle, hard assignment, thereby improving robustness to rapid, non-stationary shifts. In parallel, ASG predicts input-adaptive keep ratios for SDD, endowing each expert with a controllable perceptual field that tightens or relaxes token selection on the fly, thereby calibrating the balance between plasticity (domain-specific) and stability (domain-agnostic). This explicit, two-pronged gating makes MoASE not only more accurate but also more interpretable: \textit{routing decisions and activation thresholds are exposed as actionable signals that can be monitored and tuned across time.}

MoASE explicitly performs activation decomposition, yet under continual shifts, it may still accumulate confirmation bias, and its ASG and DAR thresholds and routing can become destabilized. To address these challenges, we extend MoASE to \textbf{MoASE++} with Domain‑Adaptive On‑Policy Distillation (DA-OPD), which provides stable and adaptive supervision under the teacher–student framework on unlabeled target streams. At the core is an EMA‑anchored teacher that \textit{acts as a slow and steady optimization anchor}, and we align the student to this anchor using temperature‑scaled reverse KL on the same adaptively sampled on‑policy augmented view, which mitigates confirmation bias that would otherwise arise from multi‑view mismatch. A lightweight policy network, conditioned on sample entropy and confidence, parameterizes multiple perturbations, while entropy regularization and strength penalties preserve diversity without destructive edits. DA‑OPD complements MoASE by operating at the input level to amplify texture‑sensitive cues and stabilize the teacher, while MoASE disentangles structure and texture at the feature level, together suppressing error accumulation, reducing inter‑domain divergence, and improving the robustness–plasticity balance under dynamic test‑time shifts.

Extensive experiments demonstrate the superiority of our proposed MoASE++ across three image classification benchmarks~\cite{krizhevsky2009learning,hendrycks2019benchmarking} and one segmentation benchmark~\cite{cordts2016cityscapes,sakaridis2021acdc} on CTTA scenarios with improvements exceeding 16.1\% in classification accuracy and 5.8\% in segmentation mIoU.
These results underscore MoASE++'s robust capability to adapt and perform reliably across dynamic environments. 

The major contributions can be summarized as:
\begin{itemize}
    \item We draw inspiration from the human visual system to develop a MoASE model that addresses error accumulation and catastrophic forgetting in the face of continuously changing distributions.
    \item We decompose the neuron-encoded activation into domain-specific and domain-agnostic features, using distinct expert models to encode texture and shape independently with SDD.
    \item We develop a multi-gate module featuring the DAR and the ASG, which utilize domain information to dynamically generate adaptive routing strategies and expert activation thresholds.
    \item We introduce DA-OPD to constitute MoASE++, an EMA-anchored on-policy reverse-KL distillation with an entropy- and confidence-conditioned augmentation policy that reduces confirmation bias and improves the robustness–plasticity balance.
\end{itemize}

\clearpage

\section{Related works}
\subsection{Continual Test-Time Adaptation}
CTTA addresses the challenge of adapting to a non-static target domain, a complication for traditional TTA methods. The pioneering work CoTTA~\cite{wang2022continual} combined bi-average pseudo-labels with stochastic weight resets to address this issue. To mitigate error accumulation, ECoTTA~\cite{song2023ecotta} employed a meta-network for output regularization. While these approaches focused on model-level solutions, other studies~\cite{gan2023decorate,yang2023exploring, ni2023distribution,liu2023vida} explored using visual domain prompts or minimal parameter adjustments for continual learning. For example, Liu~\cite{liu2023adaptive} introduced reconstruction techniques for continual adaptation, and TCA~\cite{ni2025maintaining} addressed domain shifts and error accumulation by enforcing class topological consistency, maintaining intra-class compactness, and leveraging a batch-imbalance topology-weighting mechanism to preserve inter-class stability. Moreover, BECoTTA~\cite{lee2024becotta} incorporated an MoE module into CTTA, thereby promoting effective domain-specific knowledge retention through data augmentation. Unlike previous \texttt{implicit} methods, MoASE++ adopts an \texttt{explicit} approach to CTTA, using controllable activation decomposition via MoE. 

\subsection{Activation Sparsity} 
Activation sparsity refers to the presence of numerous weakly contributing elements in activation outputs~\cite{Chen_2023_CVPR,kurtz2020inducing,yang2019dasnet,yang2019sparse}. SparseViT~\cite{song2024prosparse} revisited this concept for modern window-based vision transformers, aiming to increase speed and reduce computation. Meanwhile, Mirzadeh~\cite{mirzadeh2023relu} explored the reuse of activated neurons in LLMs, proposing strategies to reduce computation. Recently, ProSparse~\cite{song2025prosparse} has achieved high activation sparsity in LLMs while maintaining comparable performance, enabling up to 4.52x inference speedup through progressive sparsity regularization with ReLU activation.
However, all previous work aimed to improve model efficiency until~\cite{zhang2024multi,li2024emergence} revealed the contribution of shape bias to model performance, while MoASE++ leverages this characteristic to mitigate the influence of adaptive domain change.

\subsection{Mixture-of-Experts} 
MoE was initially introduced by Jacobs and Jordan~\cite{jacobs1991adaptive,jordan1994hierarchical}, who used independent modules to boost expressiveness and reduce computational costs. Eigen and Ma~\cite{eigen2013learning,ma2018modeling} evolved it into the MoE layer. In natural language processing, GShard~\cite{gshard} and Switch Transformer~\cite{switch} incorporated MoE with top-1/2 routing to enhance capacity. Fixed routing~\cite{hash,stablemoe} and ST-MoE~\cite{st_moe} aimed to stabilize training. Recent developments~\cite{zou2026flylora,zhao2026each} introduced efficient adapters within MoE, and p-FedMoE~\cite{yi2026pfedmoe} combined MoE with federated learning to achieve model-heterogeneous personalization. In addition, MoE-LLaVA~\cite{lin2026moe} incorporated MoE into the multimodal large language models. MoE has also been extended to embodied AI. For example, MoLe-VLA~\cite{zhang2026mole} employed a spatio-temporal router to enable layer-skipping in the vision-language-action model. Unlike previous works that primarily use MoE for parameter scaling, MoASE++ focuses on managing diverse neuron activations through a multi-expert structure.

\subsection{On-Policy Distillation}
OPD addresses the distribution mismatch between training and inference in autoregressive generation. Instead of learning only from fixed human-written or teacher-generated sequences, OPD lets the student sample from its own evolving policy and queries teacher feedback on these self-generated trajectories. 
Recent studies have adapted this principle to large language models. Generalized Knowledge Distillation~\cite{agarwal2024policy} trained on mixtures of teacher- and student-generated sequences to better match the student-induced distribution. MiniLLM~\cite{gu2024minillm} adopted reverse KL divergence for on-policy distillation, mitigating the mode-covering behavior of conventional forward KL. DistiLLM~\cite{ko2024distillm} further introduced skewed KL objectives to stabilize optimization by interpolating between teacher and student distributions. Beyond distribution matching, reinforcement-learning-based distillation methods~\cite{liu2025language,zhang2025distillation} used reward or preference signals to transfer higher-level reasoning behaviors. \textbf{Unlike prior OPD methods that typically query a static or frozen teacher on student-induced samples, our DA-OPD employs a dynamically evolving EMA teacher}, which provides reverse-KL supervision on adaptive on-policy views while maintaining a stable yet domain-aware optimization anchor.

\begin{figure*}[t]
\centering
\includegraphics[width=0.99\linewidth]{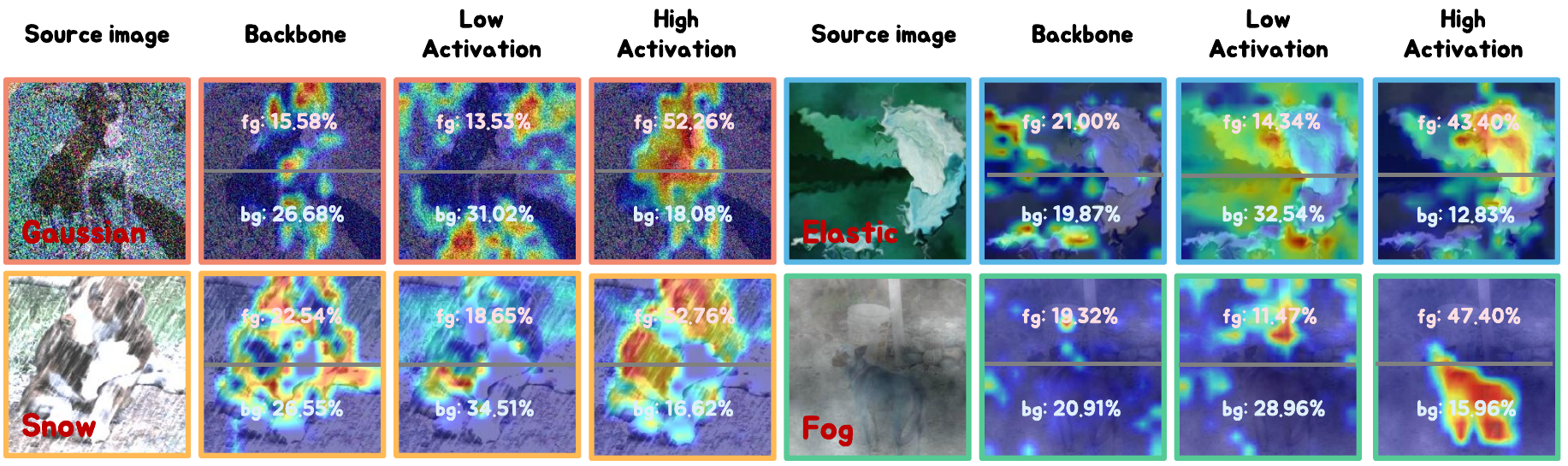}
\caption{\textbf{The visualization analysis of the Class Activation Map (CAM).} We adopt CAM to compare the attention of the low-activation, high-activation MoASE, and the original model during the continual adaptation process. The number suggests the mean IoU of the Elastic-transform, Gaussian, Fog, and Snow results based on randomly selected samples from ImageNet-C with the \textit{Foreground} and \textit{Background} of the main object segmented by SAM‑3.}
\label{fig:cam}
\end{figure*}

\section{Motivation for MoASE}
\label{sec:motivation}
Biological vision distributes computation across parallel, selective pathways. The fovea–periphery organization of the retina, together with downstream circuits, establishes a division of labor: the cone‑dominated fovea supports high‑acuity, color‑sensitive analysis in central vision, whereas the rod‑rich periphery provides high‑sensitivity, wide‑field monitoring that is especially effective under low luminance and motion~\cite{mustafi2009structure, bringmann2018primate, field2005retinal}. Rather than enforcing a rigid foreground–background split, this architecture embodies a general principle: separable streams with distinct selectivity and absolute thresholds collaborate to sustain robust perception across changes in luminance, motion, and clutter~\cite{barlow1956retinal,koenig2011absolute}. We adopt this principle as a design prior for CTTA, construct explicit, parallel pathways with different activation thresholds and specializations, and fuse them adaptively. Therefore, what is preserved (domain‑agnostic structure) and what is adapted (domain‑specific texture) can be controlled and interpreted during continual shifts.

In machine perception, analogous principles emerge in sparse and multi-path representations~\cite{li2024emergence,zhang2024multi}, as well as frequency- and texture-aware robustness~\cite{yang2020fda,xu2021fourier}. Empirically, shape cues tend to be more domain-agnostic than high-frequency textures, which are sensitive to corruption, style, and background statistics. To probe this dichotomy in a CTTA setting, we analyze Class Activation Mapping (CAM)~\cite{zhou2016learning} on ImageNet-to-ImageNet-C with a ViT-Base model~\cite{wang2022continual}. When we retain only high activations, saliency concentrates on object contours and edge intersections across domains, retaining only low activations elevates background fluctuations and domain-specific patterns as shown in Fig.~\ref{fig:cam}. This supports the hypothesis that activation magnitude correlates with a shape–texture split.

In addition, we further quantify this hypothesis in Fig.~\ref{fig:cam} by extracting the main object in the figure with SAM‑3~\cite{carion2025sam} and computing IoU between CAMs and the SAM masks of the domain‑agnostic object and the domain‑specific context. The high‑activation CAMs align much better with the domain‑agnostic object (e.g., Gaussian: 0.5226 vs.\ 0.1808 on context), whereas low‑activation CAMs favor the domain‑specific context (e.g., Elastic: 0.3254 vs.\ 0.1434 on the object). These targeted comparisons substantiate our claim that high activations capture object‑centric, while low activations emphasize context‑centric, motivating the explicit activation decomposition.

These observations motivate an explicit decomposition strategy: we equip a pretrained backbone with MoASE, a mixture-of-experts architecture that applies SDD to retain either the highest or the lowest activations at each spatial location. To avoid fixed and globally optimal thresholds that are unrealistic given the diversity of inputs, we introduce an input-aware ASG that adaptively selects thresholds for each expert. Moreover, since domain identification is often driven by texture and background statistics, we design a DAR that preferentially uses low-activation and texture-sensitive signals to guide expert routing. 

In summary, MoASE operationalizes a biologically inspired yet explicitly engineered principle, parallel selective processing with adaptive gating, to disentangle domain-agnostic structure from domain-specific texture for CTTA. This yields interpretable control over what is adapted, improving robustness across evolving test-time shifts.

\section{Methods}
\label{sec:method}

\subsection{Preliminaries}
\label{sec:prelim}
\textbf{Continual Test-Time Adaptation (CTTA).}
We start from a source-pretrained model and adapt it online to a sequence of unlabeled target domains without accessing source data. Let the source domain be $D_S=(\mathcal{X}_S,\mathcal{Y}_S)$ and the target domains be $\{D_{T_i}\}_{i=1}^{n}$ with feature spaces $\{\mathcal{X}_{T_i}\}_{i=1}^n$. We denote the Student and Teacher models by $\theta^{\mathcal{S}}$ and $\theta^{\mathcal{T}}$, respectively, both initialized from the same source-pretrained parameters. Following mean-teacher practice~\cite{tarvainen2017mean, dobler2023robust}, we update the Teacher via an exponential moving average (EMA) of the Student to stabilize adaptation under continual shifts~\cite{wang2022continual, gan2022decorate}. The adaptation is unsupervised and single-pass on target data $x \in \mathcal{X}_{T_i}$, aiming to maintain performance on previously seen distributions while adapting to evolving ones; see Fig.~\ref{fig:framework} for an overview.

\textbf{Mixture-of-Experts (MoE).}
We consider an MoE composed of $E$ expert functions $\{e_i(\cdot)\}_{i=1}^{E}$ and a trainable routing head $g(\cdot)$ that outputs a probability vector over experts. For an input $x$, the MoE prediction is a convex combination of expert outputs weighted by $g(x)$:
\begin{equation}
\label{eq:moe-routing-unified}
\begin{aligned}
y(x) \;=\; \sum_{i=1}^{E} \phi_i(x)\; e_i(x), 
\\
\boldsymbol{\phi}(x) \;=\; g(x) \;=\; \mathrm{softmax}\!\big(\boldsymbol{A}x + \boldsymbol{b}\big),
\end{aligned}
\end{equation}
subject to the simplex constraints $\phi_i(x) \ge 0$ and $\sum_{i=1}^{E} \phi_i(x) = 1$. Here $\boldsymbol{\phi}(x)\in\mathbb{R}^{E}$ denotes the per-sample expert weights (soft routing), $\boldsymbol{A}\in\mathbb{R}^{E\times d}$ and $\boldsymbol{b}\in\mathbb{R}^{E}$ are trainable parameters, and $d$ is the input dimensionality of the routing head. This gating is dense, assigning nonzero probability mass to all experts by default.

\subsection{Mixture-of-Activation-Sparsity-Experts}
\label{sec:moase}
We introduce a plug-in MoASE++ for continual test-time adaptation that decomposes activations by strength, aiming to separate domain-agnostic structures (high activations) from domain-specific textures (low activations). 

Let $F \in \mathbb{R}^{B \times N \times D}$ be the input feature with batch size $B$, number of tokens $N$, and channel dimension $D$. We consider an even number of experts $E$, partitioned into
$\mathcal{E}_a$ (high-activation preserving, domain-agnostic) and
$\mathcal{E}_s$ (low-activation preserving, domain-specific), with
$|\mathcal{E}_a|=|\mathcal{E}_s|=E/2$.
Each expert is denoted $e_i(\cdot)$ and is paired with a rank-parameterized mapping $f^{(i)}(\cdot)$ as in \eqref{eq:rank-bottleneck}. The token-wise activation score is
$s(b,n)=R(F[b,n,:])$ as in \eqref{eq:token-score}. Each expert $i$ has a base keep-ratio $q_i \in (0,1)$ and a polarity $\square_i \in \{\ge,\le\}$ indicating Top-/Bottom-$K$ selection within SDD \eqref{eq:sdd-mask-unified}.

\begin{figure*}[t]
\centering
\includegraphics[width=0.99\linewidth]{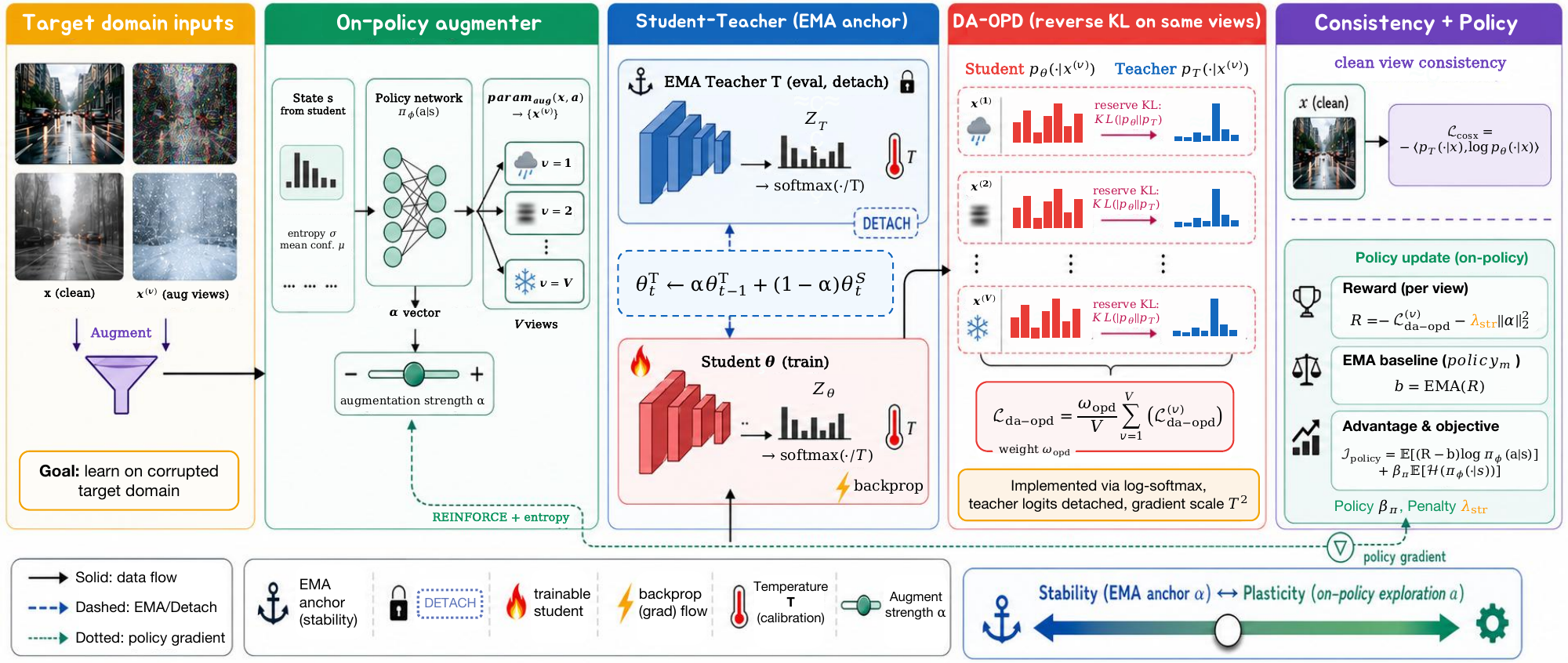}
\caption{\textbf{The framework of MoASE++.} A policy-driven augmenter samples strength a to create V on‑policy views, while the student aligns to a detached EMA teacher via reverse KL on the same views plus a clean‑view consistency term. The policy is updated with a reward, enabling a tunable stability–plasticity trade‑off.} 
\label{fig:framework}
\end{figure*}

\subsubsection{Spatial Differentiable Dropout}

Given an input tensor $F \in \mathbb{R}^{B \times N \times D}$, we first compute token-wise activation scores by reducing along the channel dimension. For a sample index $b \in \{1,\ldots,B\}$ and a token index $n \in \{1,\ldots,N\}$, the score is defined as
\begin{equation}
\begin{aligned}
  s(b,n) &= R\big(F[b,n,:]\big), \\
  R(v)   &= \|v\|_2 \;\text{ or }\; \frac{1}{D}\sum_{d=1}^{D} |v_d| .
\end{aligned}
\label{eq:token-score}
\end{equation}

Each expert $i$ is associated with a base keep-ratio $q_i$ that lies in the open interval $(0,1)$. The corresponding token budget is given by the integer quantity $K_i = \lfloor N \cdot q_i \rfloor$. For every sample $b$ and expert $i$, Spatial Differentiable Dropout constructs a binary mask $M_i \in \{0,1\}^{B \times N}$. This mask is broadcast along the channel dimension to obtain $M_i^{(3\mathrm{D})} \in \{0,1\}^{B \times N \times D}$. The mask is obtained by selecting either the top or bottom $K_i$ tokens based on the scores $s(b,n)$. The selection polarity for expert $i$ is encoded by a symbol $\square_i$ which can be either the relation $\ge$ for top-$K_i$ selection or the relation $\le$ for bottom-$K_i$ selection. The mask definition is
\begin{equation}
\begin{aligned}
      M_i(b,n) &= \mathbb{1}\left[\, s(b,n)\, \square_i \, \tau_i(b) \,\right] ,
  \\
  \tau_i(b) &= \mathrm{K^{th}}\left( \{ s(b,1), s(b,2), \ldots, s(b,N) \} ;\, K_i,\, \square_i \right) ,
\end{aligned}
\label{eq:sdd-mask-unified}
\end{equation}
where $\mathbb{1}$ is the indicator function, the threshold $\tau_i(b)$ denotes the $K_i^{th}$ order statistic of the multiset of scores for sample $b$, computed in descending order when $\square_i$ equals $\ge$ and in ascending order when $\square_i$ equals $\le$.

To equip different experts with heterogeneous representational capacities while preserving the unified SDD masking rule, each expert $i$ applies a rank-parameterized linear bottleneck to the input features. The bottleneck acts token-wise and is defined as
\begin{equation}
\begin{aligned}
      f^{(i)}(b,n,:) \;=\; W^{(i)}_{\downarrow} \!\left( W^{(i)}_{\uparrow}\, F(b,n,:) \right) ,
  \\
  W^{(i)}_{\uparrow} \in \mathbb{R}^{D \times r_i} ,
  \quad
  W^{(i)}_{\downarrow} \in \mathbb{R}^{r_i \times D} .
\end{aligned}
\label{eq:rank-bottleneck}
\end{equation}

The integer $r_i$ is the middle dimension that specifies the rank or capacity of expert $i$. A larger middle dimension, for example $r_i \ge D$, corresponds to a higher-capacity mapping. A smaller middle dimension, for example $r_i \ll D$, corresponds to a compact mapping. After applying the bottleneck mapping, the sparsified input to expert $i$ is given by
\begin{equation}
  \widetilde{F}_i \;=\; M_i^{(3\mathrm{D})} \odot f^{(i)}(F) ,
\label{eq:sdd-masked-ranked}
\end{equation}
where the operator $\odot$ denotes elementwise multiplication and the mapping $f^{(i)}(F)$ is applied independently to every token position.

In practical configurations, the sets of keep-ratios $\{q_i\}$ and middle dimensions $\{r_i\}$ are chosen to span a range of token retention levels and representational capacities across the experts. The selection polarity set $\{\square_i\}$ determines whether an expert focuses on tokens with high or low activation scores. The polarity only affects the mask construction described in \eqref{eq:sdd-mask-unified} and does not change the linear bottleneck mapping in \eqref{eq:rank-bottleneck}. During training, one can employ straight-through estimators or differentiable relaxations of the top-$K$ and bottom-$K$ operators, allowing gradients to propagate through the discrete selection induced by \eqref{eq:sdd-mask-unified}.

\subsubsection{Domain-Aware Router and Activation Sparsity Gate}
To accommodate the dynamic nature of CTTA, we introduce two input-aware modules that provide domain cues and disentangle domain-agnostic structures from domain-specific textures across experts. The lightweight heads are denoted $g_{\mathrm{DAR}}(\cdot)$ and $g_{\mathrm{ASG}}(\cdot)$, and their outputs follow the unified symbol conventions.

\textbf{Domain-Aware Router (DAR).} DAR serves as the routing head and outputs per-sample expert weights, explicitly attending to low-activation content. Given input features $F \in \mathbb{R}^{B \times N \times D}$ and scores $s(b,n)$, as shown in ~\eqref{eq:token-score}, we construct a bottom-activation mask via the unified SDD rule with ~\eqref{eq:sdd-mask-unified} with polarity $\square{=}\le$. The threshold is manually set for SDD as $q=\{\frac{id_{h}}{E},...,\frac{1}{2},\frac{id_{l}}{E},...,\frac{1}{2}\}$, where $id_{h}$ and $id_{l}$ suggests the expert ID for domain-agnostic experts and domain-specific experts. Let $M_{\mathrm{low}} \in \{0,1\}^{B \times N}$ with $K_{\mathrm{low}}=\lfloor N \cdot \tfrac{1}{2} \rfloor$, and broadcast it to $M_{\mathrm{low}}^{(3\mathrm{D})} \in \{0,1\}^{B \times N \times D}$. The low-activation sub-features and routing output are
\begin{equation}
\label{eq:dar-low}
\begin{aligned}
F_{\mathrm{low}} \;=\; M_{\mathrm{low}}^{(3\mathrm{D})} \odot F, \qquad
\phi \;=\; g_{\mathrm{DAR}}(F_{\mathrm{low}}),
\end{aligned}
\end{equation}
where $\phi \in \mathbb{R}^{B \times E}$ satisfies $\phi(b,i)\!\ge\!0$ and $\sum_{i=1}^{E}\phi(b,i)=1$ for each sample $b$, i.e., a probability-simplex routing vector over experts.

\textbf{Activation Sparsity Gate (ASG).} ASG adjusts each expert’s keep-ratio in an input-adaptive manner. It consumes the full features and outputs per-sample offsets:
\begin{equation}
\label{eq:asg-head}
\begin{aligned}
\varepsilon \;=\; g_{\mathrm{ASG}}(F), \qquad \varepsilon \in \mathbb{R}^{B \times E}.
\end{aligned}
\end{equation}

For sample $b$ and expert $i$, the adjusted keep-ratio and token budget are
\begin{equation}
\label{eq:asg-keep}
\begin{aligned}
\widehat{q}_i(b) &= \mathrm{clip}\!\big(q_i + \eta\,\varepsilon(b,i),\; q_{\min},\; q_{\max}\big), \\
\widehat{K}_i(b) &= \big\lfloor N \cdot \widehat{q}_i(b) \big\rfloor,
\end{aligned}
\end{equation}
with a small scaling factor $\eta$ (e.g., 0.1) ensuring $\widehat{q}_i(b)\!\in\![q_{\min}, q_{\max}]$.

Each expert then applies the unified Top/Bottom-$\widehat{K}_i(b)$ rule in ~\eqref{eq:sdd-mask-unified} with its polarity $\square_i \in \{\ge,\le\}$ to construct $M_i$, forming the sparsified inputs
$\widetilde{F}_i = M_i^{(3\mathrm{D})} \odot f^{(i)}(F)$ as shown in ~\eqref{eq:sdd-masked-ranked}. Finally, the MoASE++ aggregation uses DAR’s output $\phi$ as expert weights:
\begin{equation}
\label{eq:dar-agg}
\begin{aligned}
Y(b) \;=\; \sum_{i=1}^{E} \phi(b,i)\; e_i\!\big(\widetilde{F}_i(b)\big).
\end{aligned}
\end{equation}

\begin{figure}[t]
\centering
\includegraphics[width=0.99\linewidth]{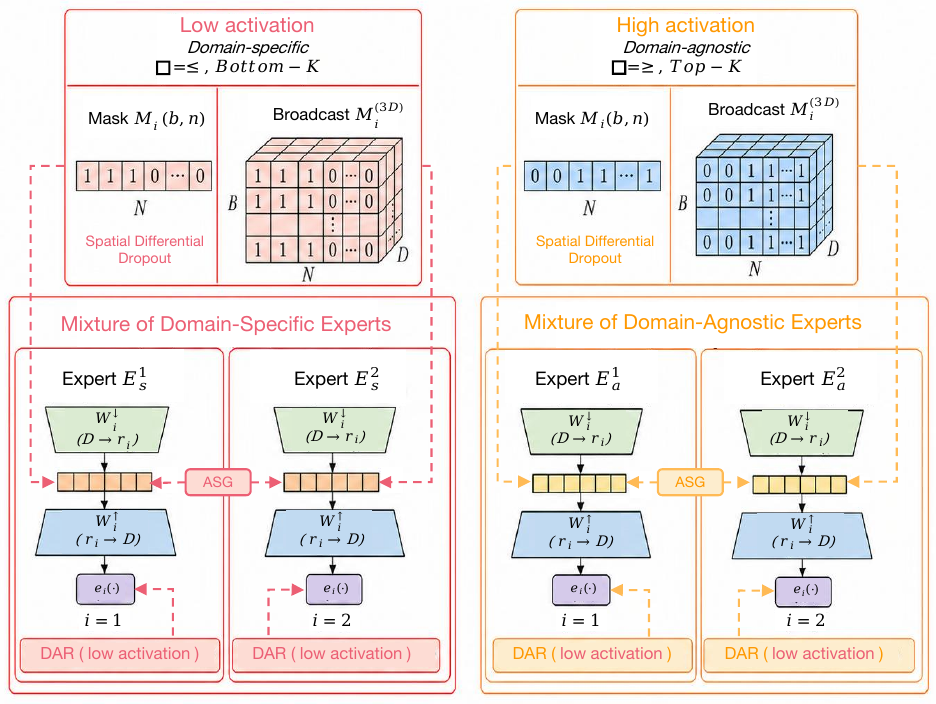}
\caption{\textbf{The detailed structure of MoASE.} We integrate the MoASE into the linear layer of the student model.} 
\label{fig:moase}
\end{figure}

\section{Domain-Adaptive On-Policy Distillation}
\label{sec:daopd}

The central design choice is an EMA-anchored teacher that serves as a slow, stable reference for aligning the student on its own induced distribution. By explicitly treating the EMA teacher as an optimization anchor, DA-OPD replaces fragile entropy-minimization heuristics with a principled reverse Kullback–Leibler (KL) alignment on the same, student-induced augmented views. A lightweight policy network parameterizes augmentation strength, yielding a controllable trade-off between stability (anchored by the EMA teacher) and plasticity (driven by on-policy exploration).

\subsection{Teacher–Student EMA}
\label{subsec:ema}

Following CoTTA, we adopt a teacher–student framework. Let $\theta_t^{S}$ and $\theta_t^{T}$ denote student and teacher parameters at step $t$, respectively. Both models share the same initialization; the teacher is then updated by the exponential moving average (EMA) of the student:
\begin{equation}
\theta_t^{T} \leftarrow \alpha\,\theta_{t-1}^{T} + (1-\alpha)\,\theta_t^{S}, \qquad \alpha\in[0,1).
\label{eq:ema}
\end{equation}

In practice, the teacher remains in evaluation mode and is detached from backpropagation, with momentum $\alpha$. The EMA teacher provides a temporally low-pass-filtered, slowly varying target that dampens target jitter and variance, mitigates catastrophic forgetting under continual shifts, and serves as a stable but also dynamic anchor for the DA-OPD objective.

\subsection{On-Policy Distillation with EMA Anchored}
\label{subsec:opd_mech}

Given an unlabeled target-domain input $x$, we form on-policy augmented views $\{x^{(v)}\}_{v=1}^{V}$ under the current student. When RL-OPD is enabled, a policy network $\pi_\phi(a\mid s)$ consumes state statistics derived from the student on the clean view (e.g., batch entropy and mean confidence) and outputs a continuous augmentation-strength vector $a$ used by a parameterized augmenter $\mathrm{param\_augment}(x,a)$. Otherwise, we use a fixed stochastic augmentation pipeline. The student and the EMA teacher produce logits on the same view $x^{(v)}$ with temperature $T>0$, we define
\begin{align}
p_\theta(\cdot\mid x^{(v)}) &= \mathrm{softmax}\left(\frac{z_\theta}{T}\right), \\
p_T(\cdot\mid x^{(v)}) &= \mathrm{softmax}\left(\frac{z_T}{T}\right).
\label{eq:tempreture}
\end{align}

DA-OPD minimizes the reverse KL on each view,
\begin{equation}
\mathcal{L}^{(v)}_{\mathrm{DA\text{-}OPD}}
= \mathrm{KL}\!\left(p_\theta(\cdot\mid x^{(v)})\,\|\,p_T(\cdot\mid x^{(v)})\right),
\label{eq:daopd_kl}
\end{equation}
implemented via $\log\mathrm{softmax}$ for numerical stability with teacher logits detached. To match temperature scaling, gradients are corrected by a factor $T^2$.

On the original (clean) view $x$, we retain a base consistency term aligning the student to the EMA teacher:
\begin{equation}
\mathcal{L}_{\mathrm{cons}}(x)
= -\, \big\langle p_T(\cdot\mid x),\ \log p_\theta(\cdot\mid x)\big\rangle,
\label{eq:cons}
\end{equation}
where $\log p_\theta$ is obtained from the student’s $\log\mathrm{softmax}$. Across augmented views, the DA-OPD term is averaged with weight $\omega_{\mathrm{opd}}$,
\begin{equation}
\begin{aligned}
\mathcal{L}_{\mathrm{da\text{-}opd}}\big(\{x^{(v)}\}\big) &=
\frac{\omega_{\mathrm{opd}}}{V}
\sum_{v=1}^{V}\mathcal{L}^{(v)}_{\mathrm{DA\text{-}OPD}}, 
\end{aligned}
\label{eq:daopd_avg}
\end{equation}

With RL-OPD active, policy parameters $\phi$ are updated by reinforcement learning using a reward that favors smaller reverse KL while penalizing excessive augmentation strength:
\begin{equation}
R = -\,\mathcal{L}^{(v)}_{\mathrm{DA\text{-}OPD}} - \lambda_{\mathrm{str}}\|a\|_2^2.
\label{eq:reward}
\end{equation}

We maintain an EMA baseline $b$ of rewards. The policy objective combines advantage-weighted log-likelihood with entropy regularization:
\begin{equation}
\begin{aligned}
\mathcal{J}_{\mathrm{policy}} & =
\mathbb{E}_{a\sim\pi_\phi(\cdot\mid s)}\!\big[(R-b)\,\log\pi_\phi(a\mid s)\big] \\
& \quad + \beta_\pi\,\mathbb{E}\big[\mathcal{H}(\pi_\phi(\cdot\mid s))\big], 
\\  \beta_\pi & =\texttt{policy\_beta}.
\end{aligned}
\label{eq:policy}
\end{equation}

In practice, policy updates are performed using computation graphs with student-teacher detachment.

\subsection{Overall Objective and Update Schedule}
\label{subsec:overall}

At each step, we solve a saddle-point problem for the student and the policy:
\begin{equation}
\begin{aligned}
\min_{\theta} \  
\Big\{ \ \mathcal{L}_{\mathrm{cons}}(x; \theta, \theta^T) 
& + \mathcal{L}_{\mathrm{da\text{-}opd}}(\{x^{(v)}\}; \theta, \theta^T) \ \Big\}, \\
\max_{\phi} \ & \mathcal{J}_{\mathrm{policy}}(s; \phi).
\end{aligned}
\label{eq:minmax}
\end{equation}

The update schedule is student $\rightarrow$ policy $\rightarrow$ teacher: first backpropagate $\mathcal{L}_{\mathrm{cons}}+\mathcal{L}_{\mathrm{da\text{-}opd}}$ to update the student. Then update the policy using the detached reward and baseline. Finally, update the teacher via \eqref{eq:ema}. To curb drift, we apply stochastic weight restoration and keep the teacher in evaluation mode to preserve the anchor’s smoothness.

By anchoring on an EMA teacher and aligning with student-induced views, DA-OPD avoids off-policy mismatch and suppresses amplification of pseudo-label noise. The parameterized augmenter grants fine-grained control over exploration versus conservatism, while temperature, view count, and alignment weight modulate coupling strength to the EMA anchor. Together with MoASE, DA-OPD constitutes MoASE++, delivering markedly improved robustness and interpretability under continual test-time domain shifts with computational overhead comparable to a single extra teacher forward pass.

\begin{figure*}[t]
\centering
\includegraphics[width=0.99\linewidth]{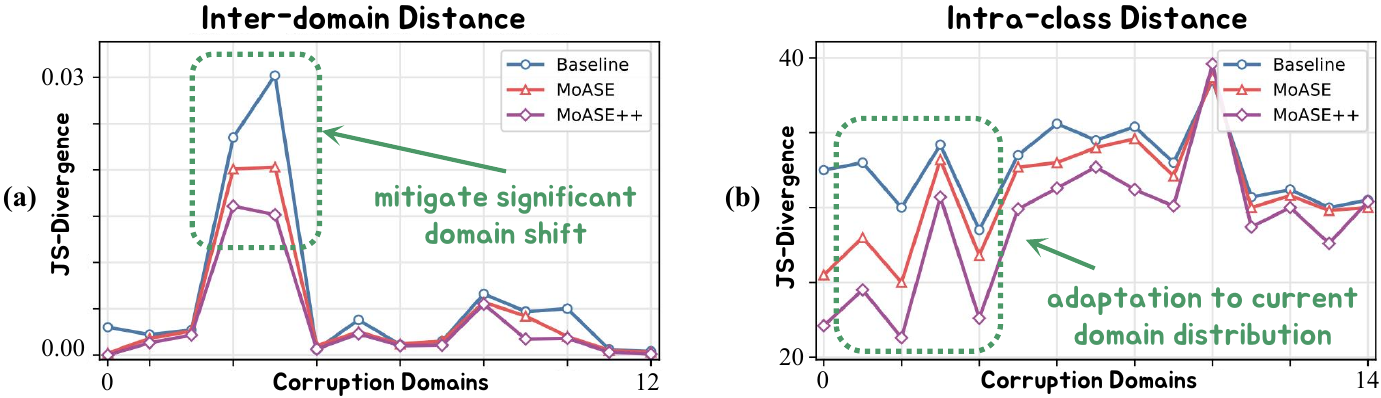}
\caption{\textbf{Inter-domain and intra-class distance in CIFAR10-C.} (a) MoASE++ and MoASE more effectively reduce inter-domain divergence than the source model. (b) MoASE++ and MoASE significantly improve intra-class feature aggregation, producing results that closely align with those of our proposed method.}
\label{divergence}
\end{figure*}

\section{Justification}
To substantiate our hypothesis, we establish a theoretical bound on the target domain error, which quantifies the generalization performance of a hypothesis $h_y$ trained on the source domain $D_S$ when applied to the target domain $D_T$. This analysis builds on the domain transfer theory proposed by Ben-David et al.~\cite{ben2006analysis, ben2010theory}. We formalize this as the following theorem:

\begin{theorem}
Let $h_y$ be a hypothesis in the hypothesis space $\mathcal{H}$, and let $f_S$ and $f_T$ denote the labeling functions for the source and target domains, respectively. Then, the target domain error $\epsilon_T(h_y)$ can be bounded as:
\begin{equation}
\begin{aligned}
    \epsilon_T(h_y) &\leq \epsilon_S(h_y) + d(D_S, D_T) + \\ &\min \big\{ E_{D_S}[|f_S(\mathbf{x}) - f_T(\mathbf{x})|], E_{D_T}[|f_S(\mathbf{x}) - f_T(\mathbf{x})|] \big\},
\end{aligned}
\label{eq:bound}
\end{equation}
where $\epsilon_S(h_y)$ is the error on the source domain, $d(D_S, D_T)$ is the total variation divergence between the source and target distributions, and the final term quantifies the discrepancy between the labeling functions.
\end{theorem}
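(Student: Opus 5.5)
This is Theorem 1 of Ben-David et al.\ (2006/2010), and we follow their argument. First we fix notation. For a distribution $D$ and functions $f,g$ with values in $[0,1]$, let $\epsilon_D(f,g)=E_{\mathbf{x}\sim D}[|f(\mathbf{x})-g(\mathbf{x})|]$. Write $\epsilon_S(h_y)=\epsilon_{D_S}(h_y,f_S)$ and $\epsilon_T(h_y)=\epsilon_{D_T}(h_y,f_T)$. Take the total variation divergence as $d(D_S,D_T)=2\sup_{B}|\Pr_{D_S}(B)-\Pr_{D_T}(B)|$ over measurable $B$; this is the normalization under which the constant in \eqref{eq:bound} is exactly one. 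We need two elementary facts:
\begin{itemize}
\item[(i)] $\epsilon_D(\cdot,\cdot)$ satisfies the triangle inequality, because it is the expectation of a pointwise absolute difference.
\item[(ii)] For any $g:\mathcal{X}\to[0,1]$, $|E_{D_S}[g]-E_{D_T}[g]|\le d(D_S,D_T)$.
\end{itemize}

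To prove (ii), let $P_S,P_T$ be densities with respect to a common dominating measure, e.g.\ $\mu=(D_S+D_T)/2$. Then $|\int g\,(P_S-P_T)\,d\mu|\le\int|P_S-P_T|\,d\mu$, and the right-hand side equals $2\sup_B|\Pr_{D_S}(B)-\Pr_{D_T}(B)|$ (take $B=\{P_S>P_T\}$). We apply (ii) with $g=|h_y-f_T|$ and $g=|h_y-f_S|$, both of which lie in $[0,1]$.

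\textbf{Source-side bound.} Start from the target error and switch the labeling function at the target distribution, then move to the source distribution:
\begin{align*}
\epsilon_T(h_y)&\le \epsilon_{D_T}(h_y,f_S)+\epsilon_{D_T}(f_S,f_T)\\
&\le \epsilon_{D_S}(h_y,f_S)+d(D_S,D_T)+\epsilon_{D_T}(f_S,f_T).
\end{align*}
The first line is (i). The second line applies (ii) to $g=|h_y-f_S|$. This yields the bound with the term $E_{D_T}[|f_S-f_T|]$.

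\textbf{Symmetric bound.} Alternatively, apply (ii) first with $g=|h_y-f_T|$, giving $\epsilon_T(h_y)\le\epsilon_{D_S}(h_y,f_T)+d(D_S,D_T)$. Then use (i) under $D_S$: $\epsilon_{D_S}(h_y,f_T)\le\epsilon_S(h_y)+\epsilon_{D_S}(f_S,f_T)$. This yields the bound with $E_{D_S}[|f_S-f_T|]$.

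Both inequalities hold simultaneously, so $\epsilon_T(h_y)$ is bounded by the smaller of the two right-hand sides, which is exactly \eqref{eq:bound}.

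The only delicate point is (ii): the normalization of $d$ and the boundedness of $h_y$, $f_S$, $f_T$ in $[0,1]$. With the convention $d=\sup_B|\cdot|$ (no factor $2$), (ii) would still hold, since $\sup_{g\in[0,1]}|\int g\,(P_S-P_T)\,d\mu|=\sup_B|\Pr_{D_S}(B)-\Pr_{D_T}(B)|$. So the proof goes through under either convention, provided $d$ is read as the TV distance consistently with (ii). We will state the convention explicitly. We will also assume measurability of the hypotheses so that the expectations are defined. Everything else is the triangle inequality.
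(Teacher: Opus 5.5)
Your proof is correct and follows the same argument as the paper, which is Ben-David et al.'s: a triangle inequality swaps $f_T$ for $f_S$ under one distribution, a change-of-measure step costs $d(D_S,D_T)$, and doing these in both orders gives the $E_{D_S}$ and $E_{D_T}$ terms before taking the minimum. Your explicit assumptions (values in $[0,1]$, $d=\int|P_S-P_T|\,d\mu$) supply what the paper's step $\int|\phi_S-\phi_T|\,|h_y-f_T|\,d\mathbf{x}\le d(D_S,D_T)$ silently needs; your remark that the factor-$2$-free convention also suffices is right for your signed version of (ii), but not for the paper's chain, which puts the absolute value inside the integral.
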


\begin{proof}
The target domain error $\epsilon_T(h_y)$ can be decomposed as source error plus distribution shift and a remainder:
\begin{align}
\epsilon_T(h_y) &= \epsilon_T(h_y) + \epsilon_S(h_y) - \epsilon_S(h_y) + \epsilon_S(h_y, f_T) - \epsilon_S(h_y, f_T).
\end{align}

Rearranging terms, we have:
\begin{equation}
\begin{aligned}
\epsilon_T(h_y) &\leq \epsilon_S(h_y) + |\epsilon_S(h_y, f_T) - \epsilon_S(h_y, f_S)| \\ &+ |\epsilon_T(h_y, f_T) - \epsilon_S(h_y, f_T)|. 
\end{aligned}
\end{equation}

Using the definition of error and introducing the density functions $\phi_S$ and $\phi_T$ for $D_S$ and $D_T$, respectively, the second term can be bounded as:
\begin{align}
|\epsilon_S(h_y, f_T) - \epsilon_S(h_y, f_S)| &\leq E_{D_S}[|f_S(\mathbf{x}) - f_T(\mathbf{x})|],
\end{align}
while the third term can be expressed as:
\begin{equation}
\begin{aligned}
|\epsilon_T(h_y, f_T) - \epsilon_S(h_y, f_T)| &\leq \int |\phi_S(\mathbf{x}) - \phi_T(\mathbf{x})| |h_y(\mathbf{x}) - f_T(\mathbf{x})| d\mathbf{x} \\
&\leq d(D_S, D_T).
\end{aligned}
\end{equation}

Combining these results, we obtain:
\begin{align}
\epsilon_T(h_y) \leq \epsilon_S(h_y) + E_{D_S}[|f_S(\mathbf{x}) - f_T(\mathbf{x})|] + d(D_S, D_T).
\end{align}

Alternatively, by symmetry, we can add and subtract $\epsilon_T(h_y, f_S)$ instead of $\epsilon_T(h_y, f_T)$, leading to a bound involving $E_{D_T}[|f_S(\mathbf{x}) - f_T(\mathbf{x})|]$. Taking the minimum of the two bounds results in ~\eqref{eq:bound}.
\end{proof}

\subsection{Practical Approximation of $d(D_S, D_T)$}
The variation divergence $d(D_S, D_T)$ is challenging to compute directly, as approximating the error of the optimal hyperplane discriminator is NP-hard. To address this, we approximate $d(D_S, D_T)$ using the $\mathcal{H}$-divergence metric~\cite{ben2010theory}, which measures the discrepancy between distributions as:
\begin{equation}
\begin{aligned}
d_\mathcal{H}(D_S, D_T) = \\ 2 \sup_{\mathcal{D} \sim \mathcal{H}} 
\Big| \mathrm{Pr}_{x \sim D_S}[\mathcal{D}(x) = 1] - \mathrm{Pr}_{x \sim D_T}[\mathcal{D}(x) = 1] \Big|.
\end{aligned}
\end{equation}

Consistent with prior work~\cite{ruder2017learning, allaway2021adversarial}, we approximate $\mathcal{H}$-divergence using the Jensen-Shannon (JS) divergence, which can be defined as:
\begin{equation}
\begin{aligned}
JS(P_{D_S} \| P_{D_T}) = \frac{1}{2} KL(P_{D_S} \| M) + \frac{1}{2} KL(P_{D_T} \| M), 
\\ M = \frac{P_{D_S} + P_{D_T}}{2},
\label{eq:js}
\end{aligned}
\end{equation}
where $KL(\cdot \| \cdot)$ denotes the KL divergence between the two distributions during CTTA.

\subsection{Empirical Validation}
To validate the theoretical analysis, we evaluate both inter-domain divergence (via $JS$ divergence) and intra-class divergence. Inspired by $k$-means clustering~\cite{macqueen1967classification}, the intra-class divergence is computed as:
\begin{align}
IC = \frac{1}{|C|} \sum_{f_i \sim C} \Big\| f_i - \frac{1}{|C|} \sum_{f_i \sim C} f_i \Big\|_2^2,
\label{eq:ic}
\end{align}
where $f_i$ are encoder output features, and $C$ is the class. 

As shown in Fig.~\ref{divergence} (a), MoASE already reduces inter-domain JS divergence across the corruption sequence relative to the source baseline, lowering the initial peak and damping subsequent oscillations. Building on this, MoASE++ further depresses the peak and shortens the recovery time, returning to a low-divergence regime and thereby reducing the area under the divergence curve, indicating greater resilience to abrupt shifts. Consistently, Fig.~\ref{divergence} (b) shows that MoASE yields tighter class clusters under early high-perturbation conditions, while MoASE++ further compresses and stabilizes these clusters throughout later domains. This progression aligns with our design: MoASE’s activation-aware decomposition and domain-aware routing first mitigate texture-driven volatility, and MoASE++ augments this with EMA-anchored on-policy reverse KL to suppress confirmation bias and accelerate stabilization. Overall, the joint reduction in inter-domain divergence and intra-class dispersion demonstrates that MoASE provides initial robustness gains, whereas MoASE++ delivers additional improvements in recovery speed, steady-state compactness, and the stability–plasticity trade-off under continual shifts.
\section{Experiments}
\label{experiment}
In this section, we evaluate the performance of our proposed MoASE++ on three image classification and one segmentation benchmarks under continual test-time adaptation scenarios. The results demonstrate the superiority of MoASE++, achieving improvements across various tasks over previous state-of-the-art methods by over 16.1\% in classification accuracy and 5.8\% in segmentation mIoU.

\subsection{Task settings and implementation}
\label{sec:4.1}

\textbf{Dataset.} We evaluate our method on three classification CTTA benchmarks: CIFAR10-to-CIFAR10C, CIFAR100-to-CIFAR100C~\cite{krizhevsky2009learning}, and ImageNet-to-ImageNet-C~\cite{hendrycks2019benchmarking}. CIFAR10-C, CIFAR100-C, and ImageNet-C are specifically designed to test the robustness of machine learning models against common real-world image corruptions and perturbations, including noise, blur, and compression. For the segmentation CTTA context~\cite{yang2023exploring,liu2023vida}, we evaluate on the Cityscapes-to-ACDC benchmark. The Cityscapes~\cite{cordts2016cityscapes} is used as the source domain, while the ACDC~\cite{sakaridis2021acdc} serves as the target domain to assess adaptation effectiveness.

\textbf{CTTA Task setting.}
Following ~\cite{wang2022continual}, for classification CTTA tasks, we adapt the pre-trained source model sequentially across fifteen target domains on CIFAR10-C, CIFAR100-C, and ImageNet-C, which exhibit the highest level of corruption severity (level 5). 
The model's online prediction capabilities are evaluated immediately after the input data is processed.
In the context of segmentation CTTA, we follow prior work~\cite{yang2023exploring,liu2023vida} and employ an off-the-shelf model pre-trained on the Cityscapes dataset. For the continual adaptation to target domains, we utilize the ACDC dataset, which comprises images captured under four distinct adverse weather conditions. To mimic real-life scenarios of environmental change, we repeatedly expose the model to these conditions (Fog→Night→Rain→Snow) over multiple cycles. \textcolor{red}{\textbf{Red}} suggests the best results while \textcolor{blue}{\textbf{blue}} suggests the second best if not specific mentioned.

\begin{table}[t]
\centering
\setlength\tabcolsep{5pt}
\caption{\label{tab:param} \textbf{Implementation details of MoASE++.} Specific hyperparameters for MoASE++. BS and HS denote the batch size and hidden size of the MoASE experts.}
\footnotesize
\resizebox{\columnwidth}{!}{
\begin{tabular}{|c|cccccc|}
\toprule
DA-OPD & BS & HS & EMA $\alpha$ & Views & Temp & Weight \\
\midrule
Cifar10-C & 40 & 48 & 0.999 & 2 & 1 & 0.5 \\
Cifar100-C & 40 & 192 & 0.998 & 1 & 2.5 & 0.1 \\
ImageNet-C & 16 & 192 & 0.995 & 1 & 2 & 0.3 \\ 
City.$\rightarrow$ACDC & 1 & 64 & 0.999 & 1 & 1.5 & 0.1 \\ 
\bottomrule
\end{tabular}
}
\vspace{-1em}
\end{table}

\textbf{Implementation Details.} 
In our CTTA experiments, we meticulously adhere to the implementation protocols established in previous research~\cite{wang2022continual} to ensure both consistency and comparability across our studies. 
For the backbone architectures in classification CTTA, we employ ViT-base~\cite{dosovitskiy2020image}, standardizing input image sizes to 384$\times$384 pixels for CIFAR-C and 224$\times$224 pixels for ImageNet-C. In segmentation CTTA, the Segformer-B5 model~\cite{xie2021segformer}, pre-trained, serves as our source model, with input dimensions reduced from 1920$\times$1080 to 960$\times$540 for processing in target domains. $\eta$ is set to 0.1. Optimization utilizes the Adam algorithm~\cite{kingma2014adam} with $(\beta_1, \beta_2) = (0.9, 0.99)$. Specific learning rates are assigned to each task: 1e-4 for classification, and 2e-4 for segmentation. For the segmentation task, the learning rate is decayed by a factor of 0.5 every 3200 iterations. In the initialization process of the MoASE, we adopt the methodology from Adaptformer~\cite{chen2022adaptformer}. The weights of the down-projection layers are initialized using Kaiming Normal initialization~\cite{he2015delving}. The detailed hyperparameter selection of MoASE++ is shown in TABLE~\ref{tab:param}. The random seed is set to 1 across all the experiments.

\begin{table*}[t]
\caption{\label{tab:imagenet}\textbf{Classification error rate(\%) for online CTTA task.} Gain(\%) represents the percentage of accuracy improvement.}
\centering
\setlength\tabcolsep{3pt}
\begin{adjustbox}{width=1\linewidth,center=\linewidth}
\begin{tabular}{c|c|ccccccccccccccc|cc}
\toprule
Method & Venue &
 \rotatebox[origin=c]{70}{Gaussian} & \rotatebox[origin=c]{70}{shot} & \rotatebox[origin=c]{70}{impulse} & \rotatebox[origin=c]{70}{defocus} & \rotatebox[origin=c]{70}{glass} & \rotatebox[origin=c]{70}{motion} & \rotatebox[origin=c]{70}{zoom} & \rotatebox[origin=c]{70}{snow} & \rotatebox[origin=c]{70}{frost} & \rotatebox[origin=c]{70}{fog}  & \rotatebox[origin=c]{70}{brightness} & \rotatebox[origin=c]{70}{contrast} & \rotatebox[origin=c]{70}{elastic\_trans} & \rotatebox[origin=c]{70}{pixelate} & \rotatebox[origin=c]{70}{jpeg}
& Mean$\downarrow$ & Gain$\uparrow$ \\
\midrule
\multicolumn{19}{c}{CIFAR10 \quad $\Rightarrow$ \quad CIFAR10-C}\\
\midrule
Source~\cite{dosovitskiy2020image} & ICLR2021 &60.1&53.2&38.3&19.9&35.5&22.6&18.6&12.1&12.7&22.8&5.3&49.7&23.6&24.7&23.1&28.2&0.0\\
TENT~\cite{DequanWangetal2021}  & CVPR2021 &57.7&56.3&29.4&16.2&35.3&16.2&12.4&11.0&11.6&14.9&4.7&22.5&15.9&29.1&19.5&23.5&+4.7\\
CoTTA~\cite{wang2022continual} & CVPR2022 &58.7&51.3&33.0&20.1&34.8&20&15.2&11.1&11.3&18.5&4.0&34.7&18.8&\textit{19.0}&17.9&24.6&+3.6\\
VDP~\cite{gan2023decorate} & AAAI2023 &57.5&49.5&31.7&21.3&35.1&19.6&15.1&10.8&10.3&18.1&4.0&27.5&18.4&22.5&19.9&24.1&+4.1\\
BECoTTA~\cite{lee2024becotta} & ICML2024 &54.6&48.1&26.5&22.1&32.8&19.7&14.9&10.1&10.2&16.3&3.9&27.2&16.4&25.7&\textit{15.4}&22.9&+5.3\\
ViDA~\cite{liu2023vida} & ICLR2024 & 52.9 & 47.9 & \textcolor{blue}{\textit{19.4}} & \textcolor{red}{\textbf{11.4}} & 31.3 & 13.3 & 7.6 & \textcolor{blue}{\textit{7.6}} & 9.9 & 12.5 & 3.8 & 26.3 & 14.4 & 33.9 & 18.2 & 20.7 & +7.5 \\
TCA~\cite{ni2025maintaining} & ICCV2025 & 41.6 & 27.9 & 22.3 & 16.0 & \textcolor{blue}{\textit{24.4}} & \textcolor{blue}{\textit{12.5}} & \textcolor{blue}{\textit{7.4}} & 11.2 & 7.7 & 11.0 & 5.5 & \textcolor{blue}{\textit{10.0}} & 20.5 & 19.4 & 23.8 & 17.5 & +10.7 \\
MoASE~\cite{zhang2026decomposing} & AAAI2026 & \textcolor{blue}{\textit{43.7}} & \textcolor{blue}{\textit{31.3}}& 25.1 & 16.5 & 28.1 & 13.8 & 9.7 & 8.3 & \textcolor{blue}{\textit{7.1}} & \textcolor{blue}{\textit{10.1}} & \textcolor{red}{\textbf{3.0}} & 12.9 & \textcolor{blue}{\textit{12.0}} & \textcolor{blue}{\textit{16.3}} & \textcolor{red}{\textbf{13.5}} & \textcolor{blue}{\textit{16.8}} & \textcolor{blue}{+\textit{11.4}} \\
\rowcolor{green!10}MoASE++ & \textbf{Proposed} & \textcolor{red}{\textbf{38.4}} & \textcolor{red}{\textbf{24.4}} & \textcolor{red}{\textbf{18.4}} & \textcolor{blue}{\textit{13.7}} & \textcolor{red}{\textbf{22.5}} & \textcolor{red}{\textbf{10.3}} & \textcolor{red}{\textbf{6.2}} & \textcolor{red}{\textbf{8.0}} & \textcolor{red}{\textbf{6.0}} & \textcolor{red}{\textbf{7.1}} & \textcolor{blue}{\textit{3.2}} & \textcolor{red}{\textbf{5.3}} & \textcolor{red}{\textbf{12.5}} & \textcolor{red}{\textbf{14.2}} & \textcolor{blue}{\textit{15.1}} & \textcolor{red}{\textbf{13.7}} & \textcolor{red}{+\textbf{14.5}} \\
\midrule
\multicolumn{19}{c}{CIFAR100 \quad $\Rightarrow$ \quad CIFAR100-C}\\
\midrule
Source~\cite{dosovitskiy2020image} & ICLR2021 &55.0&51.5&26.9&24.0&60.5&29.0&21.4&21.1&25.0&35.2&11.8&34.8&43.2&56.0&35.9&35.4&0.0\\
TENT~\cite{DequanWangetal2021}  & CVPR2021 &53.0&47.0&24.6&\textcolor{blue}{\textit{22.3}}&58.5&26.5&19.0&21.0&23.0&30.1&11.8&25.2&39.0&47.1&33.3&32.1&+3.3\\
CoTTA~\cite{wang2022continual} & CVPR2022 &55.0&51.3&25.8&24.1&59.2&28.9&21.4&21.0&24.7&34.9&11.7&31.7&40.4&55.7&35.6&34.8&+0.6\\
VDP~\cite{gan2023decorate} & AAAI2023 &54.8&51.2&25.6&24.2&59.1&28.8&21.2&20.5&23.3&33.8&\textcolor{red}{\textbf{7.5}}&\textcolor{red}{\textbf{11.7}}&32.0&51.7&35.2&32.0&+3.4\\
BECotta~\cite{lee2024becotta} & ICML2024 &53.2&45.2&23.6&22.5&51.3&23.4&22.3&21.4&21.5&31.5&16.8&23.7&32.5&43.6&34.7&31.3&+4.3\\
ViDA~\cite{liu2023vida} & ICLR2024 & 50.1 & 40.7 & 22.0 & \textcolor{red}{\textbf{21.2}} & 45.2 & \textbf{21.6} &\textcolor{red}{\textbf{16.5}} &\textcolor{red}{\textbf{17.9}} &\textcolor{red}{\textbf{16.6}} &25.6 &\textcolor{blue}{\textit{11.5}} & 29.0 & \textcolor{blue}{\textit{29.6}} &34.7 &\textcolor{red}{\textbf{27.1}} &27.3 &+8.1 \\
TCA~\cite{ni2025maintaining} & ICCV2025 & \textcolor{blue}{\textit{39.5}} & \textcolor{blue}{\textit{30.0}} & 21.5 & 25.7 & \textcolor{blue}{\textit{36.0}} & 24.2 & 19.1 & 21.7 & 20.2 & 25.2 & 15.2 & 22.1 & 34.5 & 34.1 & 36.5 & 27.1 & +8.3 \\
MoASE~\cite{zhang2026decomposing} & AAAI2026 & 42.6 & 34.2 & \textcolor{blue}{\textit{20.5}} & 23.1 & 38.7 & \textcolor{blue}{\textit{22.2}} & \textcolor{blue}{\textit{17.3}} & \textcolor{blue}{\textit{18.8}} & 18.0 & \textcolor{blue}{\textit{24.1}} & 12.7& 24.4 & \textcolor{red}{\textbf{28.2}} & \textcolor{blue}{\textit{32.7}} & \textcolor{blue}{\textit{29.0}} & \textcolor{blue}{\textit{25.8}} & \textcolor{blue}{+\textit{9.6}} \\
\rowcolor{green!10}MoASE++ & \textbf{Proposed} & \textcolor{red}{\textbf{36.9}} & \textcolor{red}{\textbf{28.3}} & \textcolor{red}{\textbf{19.8}} & 23.5 & \textcolor{red}{\textbf{35.6}} & \textcolor{red}{\textbf{21.6}} & 17.8 & 19.4 & \textcolor{blue}{\textit{17.8}} & \textcolor{red}{\textbf{23.1}} & 13.9 & \textcolor{blue}{\textit{20.3}} & 30.2 & \textcolor{red}{\textbf{30.1}} & 31.5 & \textcolor{red}{\textbf{24.7}} & \textcolor{red}{+\textbf{10.7}} \\
\midrule
\multicolumn{19}{c}{ImageNet \quad $\Rightarrow$ \quad ImageNet-C}\\
\midrule
Source~\cite{dosovitskiy2020image} & ICLR2021 &53.0&51.8&52.1&68.5&78.8&58.5&63.3&49.9&54.2&57.7&26.4&91.4&57.5&38.0&36.2&55.8&0.0\\
TENT~\cite{DequanWangetal2021}  & CVPR2021 &52.2&48.9&49.2&65.8&73&54.5&58.4&44.0&47.7&50.3&\textcolor{red}{\textbf{23.9}}&72.8&55.7&34.4&33.9&51.0&+4.8\\
CoTTA~\cite{wang2022continual} & CVPR2022 &52.9&51.6&51.4&68.3&78.1&57.1&62.0&48.2&52.7&55.3&25.9&90.0&56.4&36.4&35.2&54.8&+1.0\\
VDP~\cite{gan2023decorate} & AAAI2023 &52.7&51.6&50.1&58.1&70.2&56.1&58.1&42.1&46.1&45.8&\textcolor{blue}{\textit{23.6}}&70.4&54.9&34.5&36.1&50.0&+5.8\\
BECoTTA~\cite{lee2024becotta} & ICML2024 &50.1&46.6&42.3&57.1&65.8&51.3&51.7&42.0&41.4&42.5&25.0&67.3&50.3&\textit{31.6}&34.4&44.0&+11.8\\
ViDA~\cite{liu2023vida} & ICLR2024 & 47.7 & 42.5 & 42.9 & 52.2 & 56.9 & 45.5 & 48.9 & 38.9 & 42.7 & 40.7 & 24.3 & \textcolor{blue}{\textit{52.8}} & 49.1 & 33.5 & 33.1 & 43.4 & +12.4 \\
TCA~\cite{ni2025maintaining} & ICCV2025 & 43.9 & 39.7 & 41.7 & 59.5 & \textcolor{red}{\textbf{50.3}} & 44.8 & \textcolor{red}{\textbf{45.0}} & 43.4 & 39.8 & 39.8 & 27.0 & \textcolor{red}{\textbf{52.4}} & 44.1 & 34.0 & 35.8 & 42.7 & +13.1 \\
MoASE~\cite{zhang2026decomposing} & AAAI2026 & \textcolor{blue}{\textit{43.1}} & \textcolor{red}{\textbf{38.4}} & \textcolor{blue}{\textit{36.8}} & \textcolor{blue}{\textit{54.7}} & \textcolor{blue}{\textit{52.2}} & \textcolor{blue}{\textit{41.2}} & 48.3 & \textcolor{blue}{\textit{37.7}} & \textcolor{blue}{\textit{35.6}} & 41.1 & 25.2 & 63.5 & \textcolor{blue}{\textit{34.7}} & \textcolor{blue}{\textit{27.7}} & \textcolor{blue}{\textit{28.3}} & \textcolor{blue}{\textit{40.5}} & \textcolor{blue}{+\textit{15.3}}\\
\rowcolor{green!10}MoASE++ & \textbf{Proposed} & \textcolor{red}{\textbf{41.3}} & \textcolor{blue}{\textit{38.7}} & \textcolor{red}{\textbf{34.3}} & \textcolor{red}{\textit{50.3}} & 54.5 & \textcolor{red}{\textbf{40.0}} & \textcolor{blue}{\textit{46.2}} & \textcolor{red}{\textbf{34.7}} & \textcolor{red}{\textbf{34.2}} & \textcolor{red}{\textbf{37.0}} & 27.1 & 63.3 & \textcolor{red}{\textbf{33.4}} & \textcolor{red}{\textbf{26.4}} & \textcolor{red}{\textbf{27.8}} & \textcolor{red}{\textbf{39.7}}& \textcolor{red}{+\textbf{16.1}}\\
\bottomrule
\end{tabular}
\end{adjustbox}
\end{table*}

\textbf{Baselines.}
We compare our model with SOTA CTTA methods, including the entropy-based method TENT~\cite{wang2020tent}, the landmark CTTA work with mean-teacher framework CoTTA~\cite{wang2022continual}, the visual prompt-based method VDP~\cite{gan2023decorate} and SVDP~\cite{yang2023exploring}, the multi-rank adapter-based method ViDA~\cite{liu2023vida}, the MAE-based method Continual-MAE, which is termed as C-MAE~\cite{liu2023adaptive}, TCA~\cite{ni2025maintaining} with batch imbalance topology weighting mechanism, and the MoE-based method BECoTTA~\cite{lee2024becotta}. Noted that we report the BECoTTA results $\textit{w/o}$ Source Domain Augmentation (SDA) for a fair comparison.  

\subsection{Quantitative analysis}
\textbf{The effectiveness of classification CTTA.} As TABLE~\ref{tab:imagenet} validates the effectiveness of MoASE as we conduct experiments on CIFAR10-to-CIFAR10-C and ImageNet-to-ImageNet-C, which consists of fifteen corruption types that occur sequentially during the test time. For MoASE, the average classification error is up to 55.8\% when we directly test the source model on target domains with ImageNet-C. Our method can outperform all previous methods, achieving 15.3\% and 2.9\% improvements over the source model and the previous SOTA method, respectively. Moreover, MoASE demonstrates remarkable performance across most corruption types, highlighting its effective mitigation of error accumulation and catastrophic forgetting. Beyond MoASE, MoASE++ further reduces the online error on CIFAR10-C and ImageNet-C, achieving mean errors of 13.7\% and 39.7\%, respectively. This translates into +14.5\% and +16.1\% gains over the source model, and consistent improvements over prior SOTA under most corruptions, indicating that DA-OPD complements activation‑level disentanglement by stabilizing updates on unlabeled streams.

\textbf{The effectiveness of segmentation CTTA.} As presented in TABLE~\ref{tab:ACDC}, as we conduct experiments on the four scenarios of Cityscapes-to-ACDC and repeat three times, we observed a gradual decrease in the mIoUs of TENT over time, indicating the occurrence of catastrophic forgetting. In contrast, MoASE shows continual improvement in average mIoU (61.8→62.3→62.3) when the same sequence of target domains is repeated. Significantly, the proposed method surpasses the previous state-of-the-art CTTA method by achieving a 4.0\% increase. This notable improvement showcases MoASE's ability to adapt continuously to dynamic target domains. With DA-OPD enabled, MoASE++ achieves the highest mean mIoU with 62.5\%, and maintains the upward trend in early cycles while avoiding the late‑round degradation. The results suggest that EMA‑anchored on‑policy distillation improves the robustness–plasticity balance in long-horizon CTTA.

In addition, we present the segmentation CTTA experiment across 5 rounds in Fig.~\ref {fig:mean_iou}, showing a consistent increase in mean mIoU during the initial rounds and stable performance thereafter. First of all, we define CoTTA$^{*}$ as CoTTA with lr=2e-4. We observe that adjusting CoTTA to our setting improves early results but leads to a later drop in segmentation performance and catastrophic forgetting. In addition, MoASE achieved a 0.2\% mIoU improvement over the previous SOTA after averaging results from all rounds. Last but not least, MoASE++ consistently achieves rank‑1 performance, demonstrating its superiority in mitigating catastrophic forgetting and error accumulation.

\textbf{Adaptation across various model backbones.} 
We evaluate the flexibility of MoASE with Segformer-B0~\cite{xie2021segformer} and introduce the foundation model SAM~\cite{kirillov2023segment} as the pre-trained model for continual target domains in the Cityscapes-to-ACDC setting, following~\cite{liu2023vida}. Our method significantly enhanced performance in dynamic target domains, as shown in TABLE~\ref{tab:CIFAR10-to-CIFAR10c-dino}, achieving improvements of 0.6\% and 0.4\% for Segformer-B0 and SAM-SETR, respectively. These findings confirm that the MoASE supports effective transfer learning across model sizes and is well-suited for a variety of real-world applications in resource-limited environments. Moreover, MoASE++ yields additional gains over MoASE, e.g., +0.3 and +0.6 mIoU, while preserving efficiency. These results indicate that the proposed MoASE++ generalizes across model scales and pretraining regimes.

\begin{table*}[t]
\caption{\label{tab:ACDC} \textbf{Performance comparison for Cityscape-to-ACDC CTTA.} We repeat the target domains three times.}
\centering
\setlength\tabcolsep{2pt}
\begin{adjustbox}{width=1\linewidth,center=\linewidth}
\begin{tabular}{c|c|ccccc|ccccc|ccccc|c|c }
\toprule
\multicolumn{2}{c|}{Time}     & \multicolumn{15}{c}{$t$ \makebox[10cm]{\rightarrowfill} }                                                    \\ 
\midrule
\multicolumn{2}{c|}{Round}          & \multicolumn{5}{c|}{1}    & \multicolumn{5}{c|}{2}     & \multicolumn{5}{c|}{3}  & \multirow{2}{*}{Mean$\uparrow$}   & \multirow{2}{*}{Gain$\uparrow$}  \\ \cmidrule{1-17}
Method & Venue & Fog & Night & Rain & Snow & Mean$\uparrow$ & Fog & Night & Rain & Snow  & Mean$\uparrow$ & Fog & Night & Rain & Snow & Mean$\uparrow$ & \\ \midrule
Source\cite{xie2021segformer}   &   NIPS2021 &69.1&40.3&59.7&57.8&56.7&69.1&40.3&59.7& 	57.8&56.7&69.1&40.3&59.7& 57.8&56.7&56.7&0.0\\
TENT\cite{DequanWangetal2021}  & ICLR2021 &69.0&40.2&60.1&57.3&56.7&68.3&39.0&60.1& 	56.3&55.9&67.5&37.8&59.6&55.0&55.0&55.7&-1.0\\ 
CoTTA\cite{wang2022continual}  & CVPR2022  &70.9&41.2&62.4&59.7&58.6&70.9&41.1&62.6& 	59.7&58.6&70.9&41.0&62.7&59.7&58.6&58.6&+1.9\\ 
VDP\cite{gan2023decorate}  & AAAI2023  &70.5&41.1&62.1&59.5&  58.3    &70.4&41.1&62.2&59.4& 58.2     & 70.4&41.0&62.2&59.4& 58.2   &  58.2 & +1.5\\
BECoTTA\cite{lee2024becotta}  & ICML2024  & 72.3 & 42.0 & 63.5 & 60.1 & 59.5 & 72.4 & 41.9 & 63.5 & 60.2 & 59.5 & 72.3 & 41.9 & 63.6 & 60.2 & 59.5 & 59.5 & +2.8 \\
ViDA\cite{liu2023vida} & ICLR2024 & 71.6& 43.2& 66.0& 63.4& 61.1& 
  73.2& 44.5& 67.0& 63.9 & 62.2 
 & 73.2& 44.6 & 67.2& \textcolor{red}{\textbf{64.2}}& \textcolor{blue}{\textit{62.3}} & 61.9 
 & +5.2\\
C-MAE\cite{liu2023adaptive} & CVPR2024 & 71.9 & \textcolor{blue}{\textit{44.6}} & \textcolor{red}{\textbf{67.4}} &63.2 & \textcolor{blue}{\textit{61.8}} &71.7 & \textcolor{blue}{\textit{44.9}} &66.5 &63.1 &61.6 &72.3 & 45.4 &67.1 &63.1 &62.0 &61.8 &+5.1\\
TCA~\cite{ni2025maintaining} & ICCV2025 & 71.9& 43.6& 66.5& 63.6& 61.4& 
  \textcolor{red}{\textbf{73.5}} & 44.3& \textcolor{red}{\textbf{67.6}} & 63.7 & \textcolor{blue}{\textit{62.3}} 
 & 73.1& 44.4 & 67.3& 64.0 & 62.2 & 62.0 
 & +5.3\\
MoASE~\cite{zhang2026decomposing} & AAAI2026 & \textcolor{blue}{\textit{72.4}} & 44.5 & 66.4 & \textcolor{blue}{\textit{63.8}} & \textcolor{blue}{\textit{61.8}} & 73.0 & \textcolor{red}{\textbf{45.1}} & \textcolor{blue}{\textit{67.5}} & 63.5 & \textcolor{blue}{\textit{62.3}} & \textcolor{red}{\textbf{73.5}} & \textcolor{blue}{\textit{44.5}} & \textcolor{blue}{\textit{67.4}} & 63.5 & \textcolor{blue}{\textit{62.3}} & \textcolor{blue}{\textit{62.2}} & +\textcolor{blue}{\textit{5.5}}\\
 \rowcolor{green!10}\textbf{MoASE++} & \textbf{Proposed} & \textcolor{red}{\textbf{73.0}} & \textcolor{red}{\textbf{45.0}} & \textcolor{blue}{\textit{67.0}} & \textcolor{red}{\textbf{64.1}} & \textcolor{red}{\textbf{62.3}} & \textcolor{blue}{\textit{73.4}} & 44.8 & 67.4 & \textcolor{red}{\textbf{64.2}} & \textcolor{red}{\textbf{62.5}} & \textcolor{blue}{\textit{73.4}} & \textcolor{red}{\textbf{45.2}} & \textcolor{red}{\textbf{67.6}} & \textcolor{blue}{\textit{64.1}} & \textcolor{red}{\textbf{62.6}} & \textcolor{red}{\textbf{62.5}} 
 & +\textcolor{red}{\textbf{5.8}} \\
 \bottomrule
\end{tabular}
\end{adjustbox}
\label{tab:CTTA}
\end{table*}

\begin{figure}[t]
\centering
\includegraphics[width=0.95\linewidth]{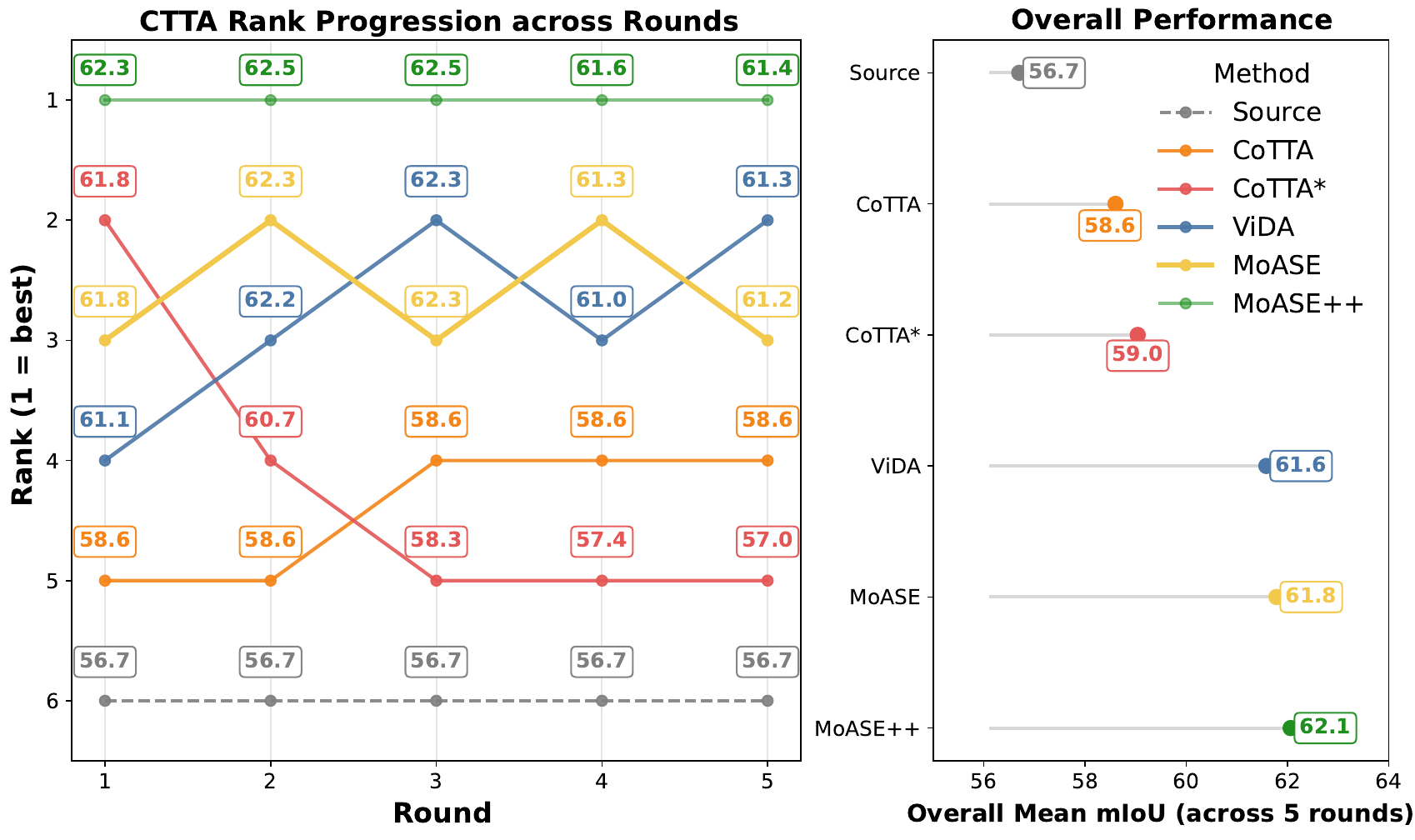}
\caption{\label{fig:mean_iou} \textbf{5 rounds segmentation CTTA on Cityscape-to-ACDC.} We sequentially repeat the same sequence of target domains 5 times with a different number of experts.}
\end{figure}


\textbf{Exploration on domain generalization (DG).} 
To evaluate the DG capabilities of our method, we adhere to the leave-one-domain-out principle, as in previous work~\cite{zhou2021domain, li2017deeper}. We select a subset of ImageNet-C domains as new source domains for model updating while designating the remaining domains as target domains where no adaptation occurs. This approach diverges from earlier domain generalization experiments, in which we implemented an unsupervised CTTA approach that updates the model solely on these unlabeled source domains. The initial model weights are derived exclusively from ImageNet pre-trained parameters. We specify that 5 of 15 ImageNet-C domains are used as source domains, with the remaining 10 domains serving as unseen target domains. The results, as detailed in Table~\ref{tab:dg10}, are noteworthy. Our method achieves a significant 12.3\% reduction in average error across these unseen domains. These encouraging results validate the DG capabilities of our approach, demonstrating its effectiveness in extracting domain-shared knowledge of MoASE. In addition, MoASE++ achieves a 39.8\% mean error on unseen domains, a +13.5\% improvement over the source model. The strong zero‑update transfer suggests that disentangling domain‑agnostic structure, coupled with on‑policy distillation, yields more durable invariances. This success offers a novel perspective on enhancing DG performance within an unsupervised framework.

\begin{table}[t]
\centering
\setlength\tabcolsep{3pt}
\caption{\label{tab:CIFAR10-to-CIFAR10c-dino} \textbf{Different size of backbone and foundation model}. mIoU score for segmentation.}
\footnotesize
\resizebox{\columnwidth}{!}{
\begin{tabular}{c|c|ccccc}
\toprule
Backbone & Method & Fog & Night & Rain & Snow & Mean$\uparrow$ \\\midrule
\multirow{3}{*}{Segformer-B0~\cite{xie2021segformer}} 
& ViDA~\cite{liu2023vida} & 57.9 & \textcolor{blue}{\textit{27.8}} & 53.1 & 51.6 & 47.6 \\
& MoASE~\cite{zhang2026decomposing} & \textcolor{blue}{\textit{58.2}} & \textcolor{red}{\textbf{28.7}} & \textcolor{blue}{\textit{53.6}} & \textcolor{blue}{\textit{52.2}} & \textcolor{blue}{\textit{48.2}} \\
& \cellcolor{green!10}MoASE++ & \cellcolor{green!10}\textcolor{red}{\textbf{59.0}} & \cellcolor{green!10}27.6 & \cellcolor{green!10}\textcolor{red}{\textbf{54.7}} & \cellcolor{green!10}\textcolor{red}{\textbf{52.8}} & \cellcolor{green!10}\textcolor{red}{\textbf{48.5}} \\
\midrule
\multirow{3}{*}{SAM~\cite{kirillov2023segment}-SETR~\cite{zheng2021rethinking}} 
& ViDA~\cite{liu2023vida} & 76.5 & 47.2 & 68.1 & 70.7 & 65.6 \\
& MoASE~\cite{zhang2026decomposing} & \textcolor{blue}{\textit{76.8}} & \textcolor{blue}{\textit{47.6}} & \textcolor{red}{\textbf{68.7}} & \textcolor{blue}{\textit{71.0}} & \textcolor{blue}{\textit{66.0}} \\
& \cellcolor{green!10}MoASE++ & \cellcolor{green!10}\textcolor{red}{\textbf{77.2}} & \cellcolor{green!10}\textcolor{red}{\textbf{48.8}} & \cellcolor{green!10}\textcolor{blue}{\textit{68.4}} & \cellcolor{green!10}\textcolor{red}{\textbf{71.8}} & \cellcolor{green!10}\textcolor{red}{\textbf{66.6}} \\
\bottomrule
\end{tabular}
}
\end{table}

\subsection{Computation analysis.}
\label{ap:comp}
A primary concern in CTTA is computational efficiency under mobile/edge constraints. From Fig.~\ref{fig:costs}, MoASE sits near the Pareto front, whereas MoASE++ consistently occupies the upper‑right in all three panels, which delivers the best accuracy but also the highest overhead: FLOPs rise to roughly 850 GMac vs. MoASE 677.31GMac, while memory usage and parameter count are only slightly higher than other baselines, accompanied by an accuracy of 62.5\%. This overhead mainly stems from multi‑expert routing and the DA‑OPD alignment. As future work, we envision an edge–cloud co‑inference scheme that keeps lightweight inference on the edge while scheduling heavier full‑expert updates in the cloud to amortize peak costs and reduce end‑to‑end latency.

\begin{table*}[t]
\centering
 \caption{\label{tab:dg10} \textbf{The domain generalization experiments on ImageNet-C}, where the source model was continually adapted on the first 5 domains and directly tested on 10 unseen domains. The evaluation of the results was conducted using ViT-base.}
\small
\setlength\tabcolsep{8pt}
\resizebox{1.99\columnwidth}{!}{%
\begin{tabular}{c|c|cccccccccc|cc}
\toprule
 & &  \multicolumn{10}{c|}{\textbf{Directly test on 10 unseen domains}}& \multicolumn{2}{c}{\textbf{Unseen}} \\ 
 \midrule
Method & Venue & \rotatebox[origin=c]{70}{motion} & \rotatebox[origin=c]{70}{zoom} & \rotatebox[origin=c]{70}{snow} & \rotatebox[origin=c]{70}{frost} & \rotatebox[origin=c]{70}{fog} & \rotatebox[origin=c]{70}{bright.} & \rotatebox[origin=c]{70}{contrast} & \rotatebox[origin=c]{70}{elastic} & \rotatebox[origin=c]{70}{pixelate} & \rotatebox[origin=c]{70}{jpeg}
& Mean$\downarrow$ & Gains$\uparrow$ \\
\midrule
Source~\cite{xie2021segformer}& ICLR2021 & 58.5&63.3&49.9&54.2&57.7&26.4&91.4&57.5&38.0&36.2&53.3&0.0\\
TENT~\cite{DequanWangetal2021}& CVPR2021& 56.0 & 61.3& 45.7 &49.6   &  56.6 &24.8&94.0&55.6&37.1&35.1&51.6&+1.7\\
CoTTA~\cite{wang2022continual}& CVPR2022& 57.3 & 62.1& 49.1 &52.0   &  57.1 &26.4&91.9&57.1&37.6&35.3&52.6&+0.7\\
ViDA~\cite{liu2023vida}& ICLR2024 &46.4 & \textcolor{blue}{\textit{52.7}} &39.8  & 43.7  & \textcolor{red}{\textbf{42.2}}&\textcolor{blue}{\textit{23.5}}&71.5&49.6&33.9&33.3&43.7&+9.6\\ 
MoASE~\cite{zhang2026decomposing}& AAAI2026& \textcolor{blue}{\textit{43.9}} & 53.7 & \textcolor{blue}{\textit{38.0}} & \textcolor{blue}{\textit{37.6}} & 46.0 & 24.0 & \textcolor{blue}{\textit{66.0}} & \textcolor{blue}{\textit{42.8}} & \textcolor{blue}{\textit{28.9}} & \textcolor{blue}{\textit{29.6}} & \textcolor{blue}{\textit{41.0}} & \textcolor{blue}{\textit{+12.3}}\\ 
\rowcolor{green!10}\textbf{MoASE++}& \textbf{Proposed} & \textcolor{red}{\textbf{42.4}} & \textcolor{red}{\textbf{52.1}} & \textcolor{red}{\textbf{37.5}} & \textcolor{red}{\textbf{39.1}} & \textcolor{blue}{\textit{42.6}} & \textcolor{red}{\textit{22.6}} & \textcolor{red}{\textbf{63.3}} & \textcolor{red}{\textbf{42.9}} & \textcolor{red}{\textbf{28.2}} & \textcolor{red}{\textbf{27.3}} & \textcolor{red}{\textbf{39.8}} & \textcolor{red}{\textbf{+13.5}}\\ 
\bottomrule
\end{tabular}
}
\end{table*}

\begin{figure*}[t]
    \centering
    \subfloat[FLOPs (GMac)]{\includegraphics[width=0.32\linewidth]{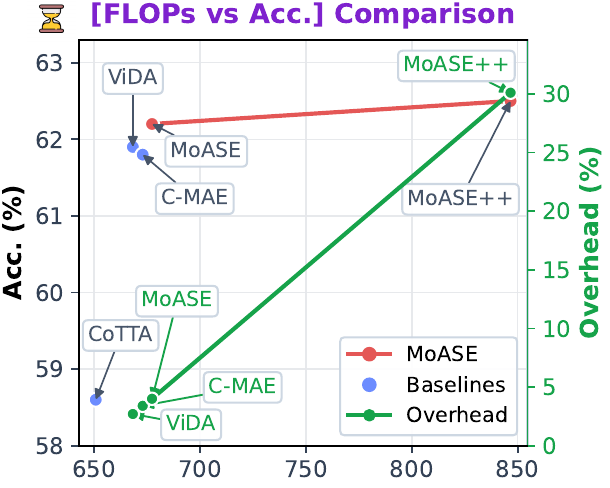}}\hfill
    \subfloat[GPU Memory (M)]{\includegraphics[width=0.32\linewidth]{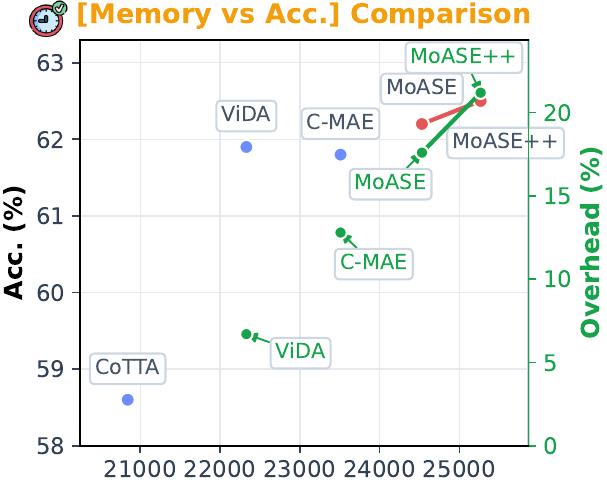}}\hfill
    \subfloat[Parameters (M)]{\includegraphics[width=0.35\linewidth]{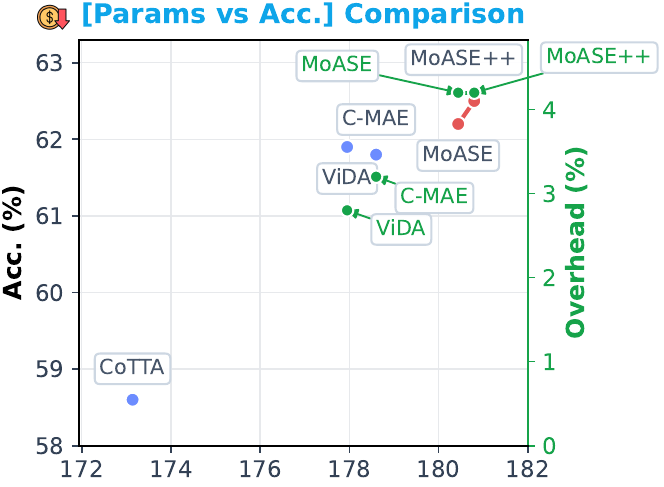}}
    \caption{\label{fig:costs} \textbf{Analysis of computational costs} for our proposed MoASE++ and previous SOTA baselines, including a detailed comparison in terms of (a) FLOPs, (b) GPU memory, and (c) the number of parameters.}
    \vspace{-1em}
\end{figure*}

\subsection{Ablation study}

\textbf{Influence of where to conduct SDD.}
\label{ap:dimension}
In our experiments, we implemented SDD on both the spatial-wise token dimension $n$ and the channel dimension $d$ of the input feature $\mathcal{F}\in \mathbb{R}^{b\times n \times d}$, as shown in Fig.~\ref{fig:token} (a). We observed that in most target domains, channel-wise MoASE maintained performance comparable to spatial-wise MoASE, with an average error rate of 19.2\% and a +9.0\% improvement over the baseline from the source domain. However, in the first two target domains, implementing SDD specifically on the spatial dimension significantly enhanced model performance, reducing the error rate to an average of over 10\%. We attribute this improvement to the distribution of domain-related information, such as styles and noise, across the entire image for each token. Consequently, decomposing the activation along the token dimension appears more effective for distinguishing domain-specific from domain-agnostic features, thereby tailoring the model more adeptly to the nuances of each domain.

\textbf{Influence of the middle-layer dimension.}
\label{ap:ml}
According to the results presented in Fig~\ref{fig:token} (b), the table compares the baseline method's performance against three experimental configurations with hidden sizes of 24, 48, 96, and 192 on the CIFAR10-to-CIFAR10-C online CTTA task. An interesting observation from the table is that the correlation between model performance and the size of the hidden dimension is not linear. Specifically, a hidden size of $h$=48 yields optimal model performance with a mean error rate of 16.8\%, whereas a smaller hidden size of $h$=24 results in a higher error rate of 20.8\%, and a larger hidden size of $h$=96 yields an even higher error rate of 21.5\%. Moreover, further increasing the hidden dimension to $h$=192 yields no significant performance improvement. The error rate is marginally reduced to 19.0\%, but this comes at the cost of a substantial increase in model size. We attribute such results to the nonlinear relationship between model architecture and task complexity, which is critical in resource-limited settings such as autonomous driving, where efficient yet effective models are essential. Therefore, we selected a different hidden size for MoASE for different datasets to best balance performance and computational efficiency. 


\begin{figure*}[t]
    \centering
    \subfloat[Where to conduct SDD?]{\includegraphics[width=0.45\linewidth]{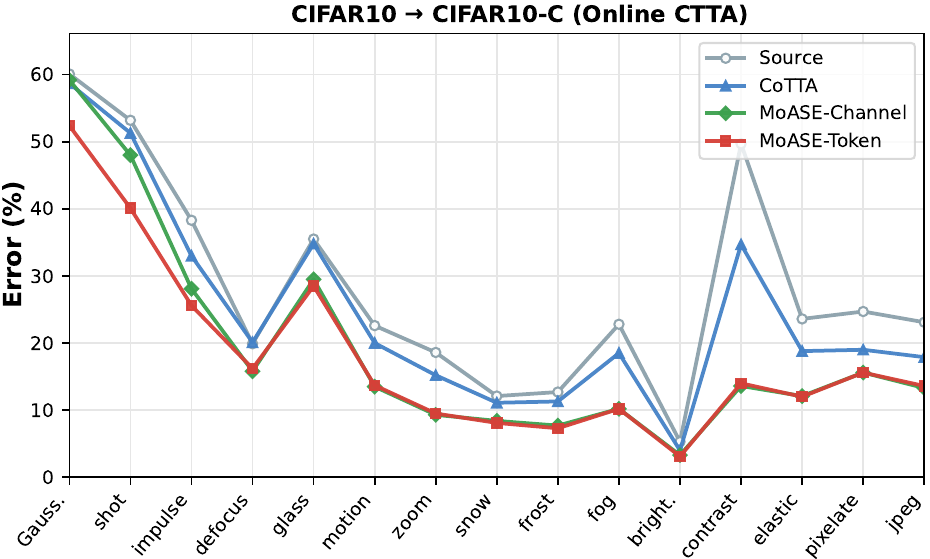}}\hfill
    \subfloat[What is the hidden dimension?]{\includegraphics[width=0.45\linewidth]{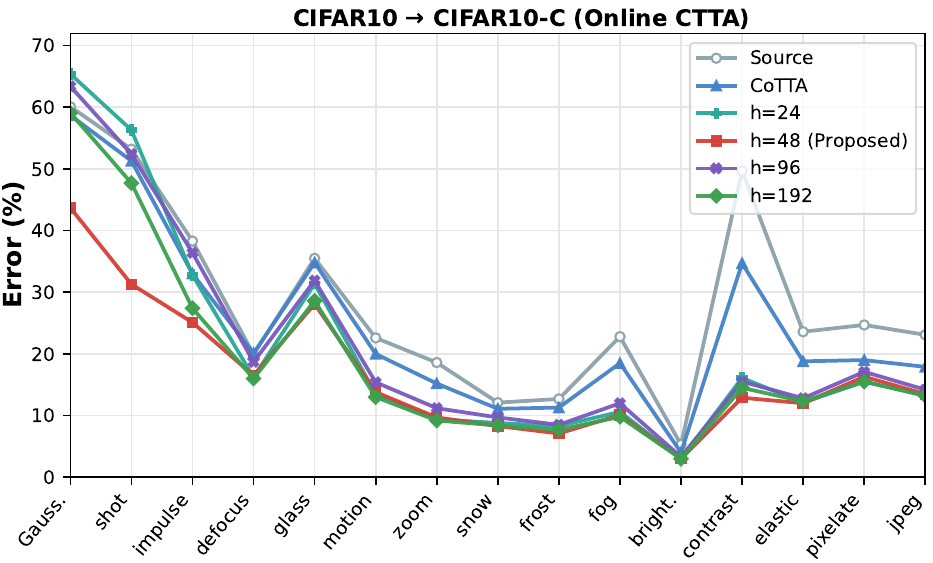}}
    \caption{\label{fig:token} \textbf{Ablation study to examine} the influence of the hyperparameters for MoASE.}
    \vspace{-1em}
\end{figure*}

\textbf{Different number of experts.}
We evaluate the impact of expert counts in MoASE on mIoU under adverse weather in Cityscape-to-ACDC, as shown in TABLE~\ref{tab:experts}. Configurations with 2 to 16 experts were tested, four experts achieved the best performance, with a mean mIoU of 61.8\% in Fog and Snow. Notably, performance does not scale linearly with the number of experts. With too few experts, the model lacks the capacity for factorization, causing high- and low-activation patterns to compete within the same expert, leading to coarse, over-smoothed boundaries. With too many experts, routing becomes sparse and noisy: each expert receives fewer effective samples per condition, calibration drifts across weather shifts, and over-specialized experts interfere during adaptation. The sweet spot around four experts yields precise activation decomposition that balances general and specific knowledge, stabilizes gating, and preserves class-consistent edges under low-visibility cues. In practice, this configuration also offers a favorable compute-to-latency trade-off, suggesting that careful control of expert granularity, rather than brute-force scaling, is key to robust CTTA under adverse weather conditions.

\begin{table}[t]
\centering
\setlength\tabcolsep{8pt}
\caption{\label{tab:experts} \textbf{Different number of experts in MoASE.} mIoU score for Cityscape-to-ACDC within the first round.}
\footnotesize
\resizebox{\columnwidth}{!}{
\begin{tabular}{c|cccc|c}
\toprule
num. E. & Fog & Night & Rain & Snow & Mean$\uparrow$ \\
\midrule
$E=2$ & \textcolor{blue}{\textit{71.6}} & 44.0 & \textcolor{red}{\textbf{66.5}} & \textcolor{blue}{\textit{63.7}} & \textcolor{blue}{\textit{61.5}} \\
\rowcolor{green!10} \textbf{$E=4$} & \textcolor{red}{\textbf{72.4}} & \textcolor{red}{\textbf{44.5}} & \textcolor{blue}{\textit{66.4}} & \textcolor{red}{\textbf{63.8}} & \textcolor{red}{\textbf{61.8}} \\
$E=8$ & 71.4 & 44.0 & 65.0 & 61.7 & 60.5 \\ 
$E=16$ & 71.5 & \textcolor{blue}{\textit{44.1}} & 65.9 & 63.3 & 61.2 \\ 
\bottomrule
\end{tabular}
}
\end{table}

\begin{table}[t]
\centering
\setlength\tabcolsep{1pt}
\caption{\label{tab:number-ds-da} \textbf{Balance of DA and DS experts.} mIoU score for Cityscape-to-ACDC with different numbers of DS and DA experts for MoASE.}
\footnotesize
\resizebox{\columnwidth}{!}{
\begin{tabular}{c|ccccc}
\toprule
Backbone & $0E_{s}$-$4E_{a}$ & $1E_{s}$-$3E_{a}$ & \cellcolor{green!10}$2E_{s}$-$2E_{a}$ & $3E_{s}$-$1E_{a}$ & $4E_{s}$-$0E_{a}$ \\
\midrule
MoASE & 61.1 & 61.7 & \cellcolor{green!10}62.2 & 60.5 & 58.6 \\
MoASE++ & 61.3 & \textcolor{blue}{\textit{62.0}} & \cellcolor{green!10}\textcolor{red}{\textbf{62.5}} & 60.7 & 59.5 \\
\bottomrule
\end{tabular}
}
\vspace{-1em}
\end{table}

\textbf{Expert balance.}
We apply balanced $E_{s}/E_{a}$ experts (e.g., $2E_{s}$-$2E_{a}$) to ensure equitable learning of high- and low-activation features, hypothesizing improved cross-domain understanding.
Beyond capacity fairness, balanced assignments reduce gradient competition and prevent collapse toward a single activation regime, yielding more stable routing and better calibration under distribution shift.
In contrast, asymmetric expert distributions in TABLE~\ref{tab:number-ds-da} consistently underperform others. For example, over-allocating $E_{s}$ overfits to salient, high-contrast cues and weakens robustness to texture or illumination corruptions, while over-allocating $E_{a}$ blurs class-discriminative structure and slows adaptation.
These results indicate that balanced experts provide complementary inductive biases, preserving object-centric signals while retaining sensitivity to low-activation context, both of which are crucial for effective CTTA and align with our hypothesis.


\begin{table}[t]
\centering
\setlength\tabcolsep{4pt}
\caption{\label{tab:ablation} \textbf{Ablation study} on Cityscape to ACDC showing the effectiveness of each module for MoASE++.}
\footnotesize
\resizebox{\columnwidth}{!}{
\begin{tabular}{c|ccccc|c}
\toprule
Methods & MoE & SDD & DAR & ASG & DA-OPD & Mean$\uparrow$ \\
\midrule
$Ex_{0}$ & \textcolor{red}{\ding{55}} & \textcolor{red}{\ding{55}} & \textcolor{red}{\ding{55}} & \textcolor{red}{\ding{55}} & \textcolor{red}{\ding{55}} & 58.6 \\ 
$Ex_{1}$ & \textcolor{rgb:red,0.0;green,0.8;blue,0.2}{\ding{51}} & \textcolor{red}{\ding{55}} & \textcolor{red}{\ding{55}} & \textcolor{red}{\ding{55}} & \textcolor{red}{\ding{55}} & 57.9 \\ 
$Ex_{2}$ & \textcolor{rgb:red,0.0;green,0.8;blue,0.2}{\ding{51}} & \textcolor{rgb:red,0.0;green,0.8;blue,0.2}{\ding{51}} & \textcolor{red}{\ding{55}} & \textcolor{red}{\ding{55}} & \textcolor{red}{\ding{55}} & 61.1 \\
$Ex_{3}$ & \textcolor{rgb:red,0.0;green,0.8;blue,0.2}{\ding{51}} & \textcolor{rgb:red,0.0;green,0.8;blue,0.2}{\ding{51}} & \textcolor{rgb:red,0.0;green,0.8;blue,0.2}{\ding{51}} & \textcolor{red}{\ding{55}} & \textcolor{red}{\ding{55}} & 61.5 \\
$Ex_{4}$ & \textcolor{rgb:red,0.0;green,0.8;blue,0.2}{\ding{51}} & \textcolor{rgb:red,0.0;green,0.8;blue,0.2}{\ding{51}} & \textcolor{rgb:red,0.0;green,0.8;blue,0.2}{\ding{51}} & \textcolor{rgb:red,0.0;green,0.8;blue,0.2}{\ding{51}} & \textcolor{red}{\ding{55}} & \textcolor{blue}{\textit{62.0}} \\
\cellcolor{green!10}$Ex_{5}$ & 
\cellcolor{green!10}\textcolor{rgb:red,0.0;green,0.8;blue,0.2}{\ding{51}} & 
\cellcolor{green!10}\textcolor{rgb:red,0.0;green,0.8;blue,0.2}{\ding{51}} & 
\cellcolor{green!10}\textcolor{rgb:red,0.0;green,0.8;blue,0.2}{\ding{51}} & 
\cellcolor{green!10}\textcolor{rgb:red,0.0;green,0.8;blue,0.2}{\ding{51}} & 
\cellcolor{green!10}\textcolor{rgb:red,0.0;green,0.8;blue,0.2}{\ding{51}} & 
\cellcolor{green!10}\textcolor{red}{\textbf{62.5}} \\
\bottomrule
\end{tabular}
}
\vspace{-1em}
\end{table}

\textbf{Effectiveness of each proposed module.} 
We conducted an ablation study in the Cityscape-to-ACDC CTTA. TABLE~\ref{tab:ablation} shows $Ex_{0}$ as the baseline using the CoTTA and $Ex_{1}$ adding a 4-expert MoE architecture to $Ex_{0}$. However, simply adding MoE did not improve performance, it instead decreased performance by 0.7\%. In contrast, implementing our SDD in $Ex_{2}$ improved segmentation results by 3.2\%. Further introductions of our modules from $Ex_{3}$ increased mIoU from 61.5\%, confirming the effectiveness of MoASE. It is noteworthy that $Ex_{4}$, employing ASG, significantly outperforms $Ex_{3}$ with full token inputs, indicating that a low activation domain-specific feature alone is sufficient to effectively determine the weights of various experts. The full MoASE++ $Ex_{5}$ attains the best performance, outperforming the intermediate variants $Ex_{4}$ without DA‑OPD. The ablations demonstrate the effectiveness of each proposed method and confirm that SDD‑based activation decomposition is the primary driver, while ASG/DAR and DA‑OPD provide complementary gains in routing stability and supervisory signal quality.

\textbf{Influence of the DA-OPD hyperparameters.} 
We also conducted an ablation study of DA-OPD hyperparameters on CIFAR100-C, as shown in TABLE~\ref{tab:hyper}, following a control-variable design to isolate the effect of each factor. Alpha sweep shows that a moderate EMA momentum is most stable, as $\alpha$=0.998 yields the lowest error at 24.7, while slightly lower or higher momentum degrades performance. Moreover, the sweep view indicates that single-view DA-OPD is preferable in this online setting, since increasing the number of views from 1→2→3 raises the error from 25.3 to 29.4 with significant computational overhead, likely due to additional variance and off-policy mismatch across views. For CIFAR‑10‑C, we hypothesize that two views are beneficial because the dataset’s relative simplicity may require additional auxiliary information to stabilize adaptation. Last but not least, the temperature sweep suggests a “moderate-softening” regime is optimal. For example, T=2.5 outperforms T=2.0 and T=3.0, as excessive or insufficient smoothing hampers stable teacher–student alignment. Finally, the weight sweep highlights the need for cautious coupling to prevent overreliance on the teacher, whereas an overly large weight amplifies confirmation bias. 

\begin{figure*}[t]
\centering
\includegraphics[width=0.99\linewidth]{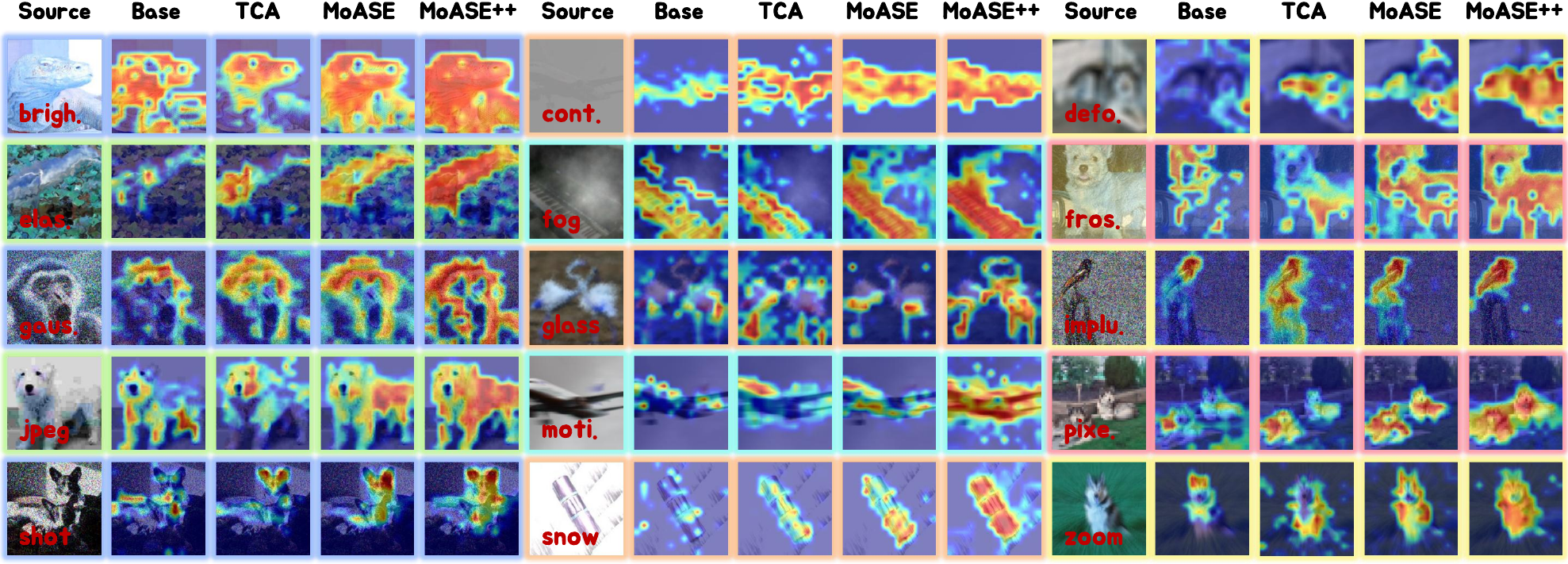}
\caption{\textbf{The qualitative analysis} of the original and CAM visualization for the source model, TCA, MoASE, and MoASE++ based on ImageNet-C with 15 domains.}
\label{fig:vis2}
\vspace{-1em}
\end{figure*}

\begin{table}[t]
\centering
\setlength\tabcolsep{1pt}
\caption{\label{tab:hyper} \textbf{Ablation study for the hyperparameters of DA-OPD} on CIFAR100-C. Groups follow the control-variable method: in each block, only one factor changes. Blue suggests our final selection for MoASE++.}
\footnotesize
\resizebox{\columnwidth}{!}{
\begin{tabular}{c|cccc|c}
\toprule
Methods & EMA $\alpha$ & OPD-views & OPD-temp & OPD-weight & Mean$\uparrow$ \\
\midrule
\multicolumn{6}{c}{\textit{Alpha sweep (fix views$=1.0$, temp$=2.5$, weight$=0.1$)}}\\
\midrule
$Ex_{5-0-0}$ & 0.997 & 1.0 & 2.5 & 0.1 & 25.1 \\
\rowcolor{green!10}$Ex_{5-0-1}$  & \textcolor{blue}{\textbf{0.998}} & 1.0 & 2.5 & 0.1 & \textcolor{red}{\textbf{24.7}} \\
$Ex_{5-0-2}$ & 0.999 & 1.0 & 2.5 & 0.1 & 25.4 \\
\midrule
\multicolumn{6}{c}{\textit{Views sweep (fix $\alpha{=}0.998$, temp$=2.0$, weight$=0.3$)}}\\
\midrule
\rowcolor{green!10}$Ex_{5-1-0}$ & 0.998 & \textcolor{blue}{\textbf{1.0}} & 2.0 & 0.3 & \textcolor{red}{\textbf{25.3}} \\
$Ex_{5-1-1}$ & 0.998 & 2.0 & 2.0 & 0.3 & 27.7 \\
$Ex_{5-1-2}$ & 0.998 & 3.0 & 2.0 & 0.3 & 29.4 \\
\midrule
\multicolumn{6}{c}{\textit{Temperature sweep (fix views$=1.0$, views$=2.5$, temp$=0.1$)}}\\
\midrule
$Ex_{5-2-1}$  & 0.998 & 1.0 & 2.0 & 0.1 & 25.0 \\
\rowcolor{green!10}$Ex_{5-2-2}$  & 0.998 & 1.0 & \textcolor{blue}{\textbf{2.5}} & 0.1 & \textcolor{red}{\textbf{24.7}} \\
$Ex_{5-2-3}$ & 0.998 & 1.0 & 3.0 & 0.1 & 25.1 \\
\midrule
\multicolumn{6}{c}{\textit{Weight sweep (fix $\alpha{=}0.998$, view$=2.0$, temp$=0.3$)}}\\
\midrule
\rowcolor{green!10}$Ex_{5-3-0}$  & 0.998 & 1.0 & 2.0 & \textcolor{blue}{\textbf{0.1}} & \textcolor{red}{\textbf{25.0}} \\
$Ex_{5-3-1}$ & 0.998 & 1.0 & 2.0 & 0.3 & 25.3 \\
$Ex_{5-3-2}$ & 0.998 & 1.0 & 2.0 & 0.5 & 29.0 \\
\bottomrule
\end{tabular}
}
\end{table}

\subsection{Qualitative analysis}
\textbf{CAM visualization.} We conducted a qualitative analysis of the CAM on ImageNet-C, as illustrated in the Fig.~\ref{fig:vis2}. We can easily observe that the original base model's attention is dispersed due to continuous domain shifts, while TCA~\cite{ni2025maintaining} also fails to detect the complete object in continual domain-adaptive scenarios. In contrast, MoASE effectively concentrates on regions relevant to the target categories, such as dogs and planes, during classification decisions. Moreover, MoASE++ tends to focus on the entire target object, reinforcing the effectiveness of our approach.

\begin{figure}[t]
\centering
\includegraphics[width=0.99\linewidth]{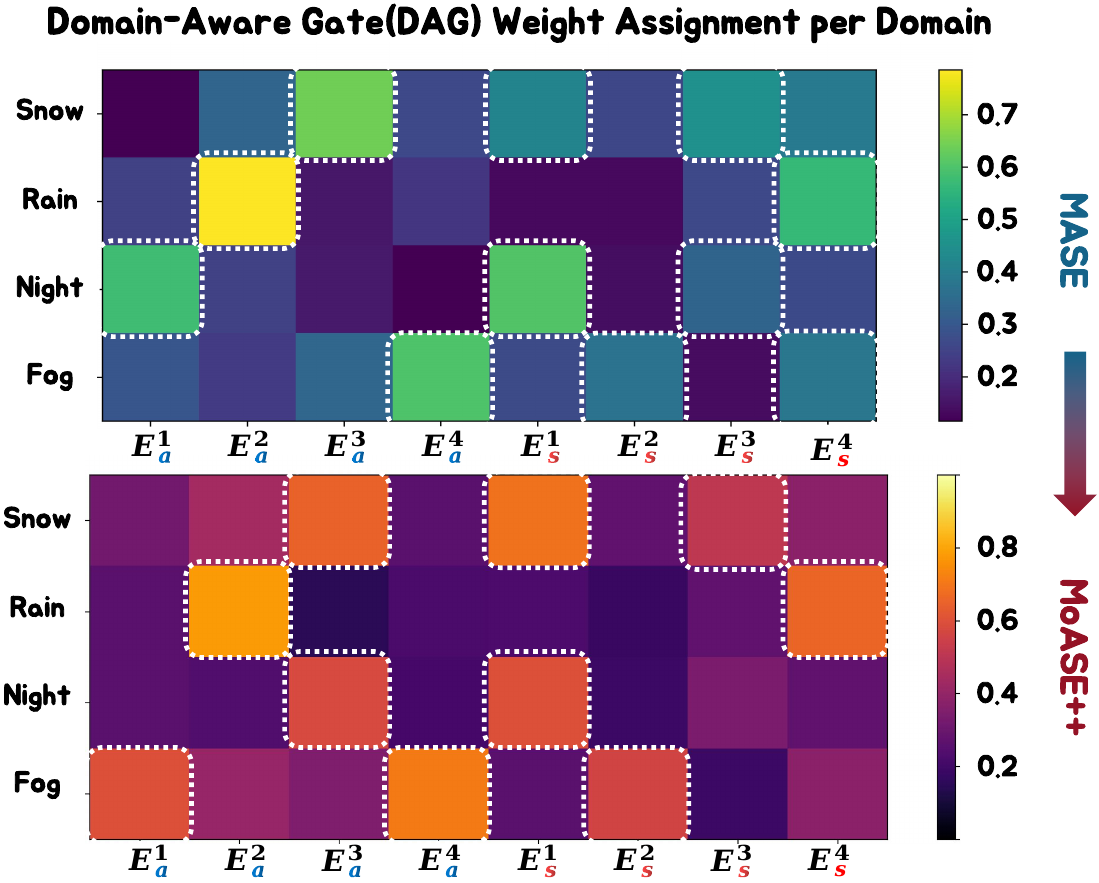}
\caption{\textbf{The visualization analysis} of the routing strategy for MoASE and MoASE++. $E_{s}^{1}$ and $E_{a}^{3}$ stands for the 1$^{st}$ domain-specific and the 3$^{rd}$ domain-agnostic experts.}
\label{fig:router}
\vspace{-1em}
\end{figure}

\textbf{Routing strategy.} We summed and normalized the output of the router on ACDC with 8 experts for better illustration, as shown in the top right of Fig.~\ref{fig:router}. DAR assigns varying weights to experts based on domain type. For instance, $E_{s}^{1}$ and $E_{s}^{3}$ primarily address snow, while $E_{s}^{2}$ and $E_{s}^{4}$ are geared towards Fog. With DA-OPD, MoASE++ shows more selective expert activation, evidencing stronger domain-specific specialization. Accordingly, DA-OPD helps DAR realize expert specialization more robustly. Moreover, each domain-agnostic expert is sensitive to different types of features, suggesting that a low-activation-only DAR can also handle characteristic token assignment.

\textbf{Segmentation visualization.} For further validation, we provide additional qualitative comparisons in the Cityscapes-to-ACDC CTTA scenario, shown in the Fig~\ref{fig:vis1}. Our method outperforms CoTTA~\cite{wang2022continual}, ViDA~\cite{liu2023vida}, and Continual-MAE~\cite{liu2023adaptive} in producing segmentation maps for the snow and night domains, precisely differentiating sidewalks from roads and identifying small objects such as people, vegetation, and fences (highlighted in the white box). These results highlight our method's precise segmentation capabilities and robustness against dynamic domain shifts. Additionally, our segmentation maps align closely with the Ground Truth, leading to significant visual improvements. In addition, MoASE++ produces more complete and class‑consistent masks than strong baselines, especially under Snow and Night, where small structures (e.g., pedestrians, fences) are better preserved. The sharper boundaries align with our design: activation‑aware experts prioritize object structure while DA‑OPD prevents drift during on‑the‑fly updates.

\begin{figure*}[t]
\centering
\includegraphics[width=0.99\linewidth]{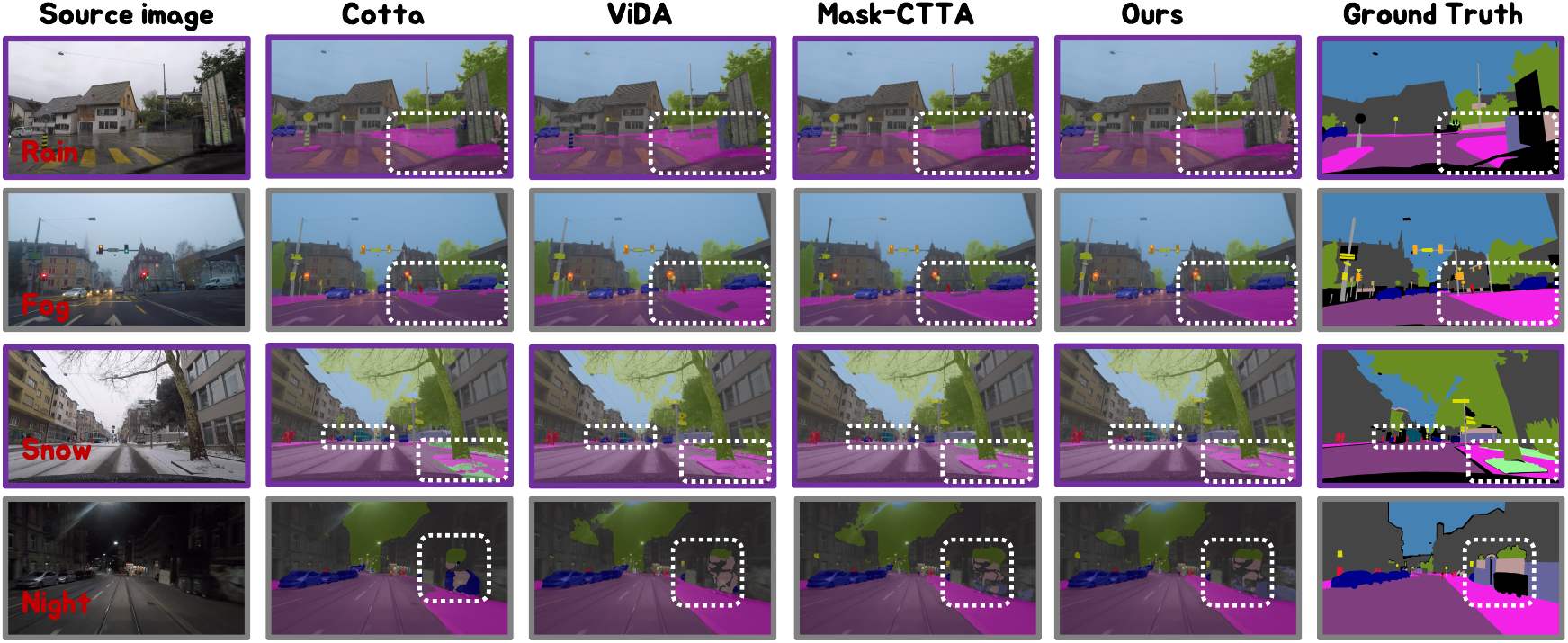}
\caption{\textbf{The qualitative analysis} of the segmentation qualitative comparison of MoASE with previous SOTA methods on the ACDC dataset. Our method could better segment different pixel-wise classes, such as shown in the white box.}
\label{fig:vis1}
\end{figure*}
\section{Conclusion}
\label{conclusion}
Inspired by the human visual system, this work studies DNNs in CTTA and introduces MoASE, a mixture-of-activation-sparsity-experts model that decouples neural activations into high- and low-level components via Spatial Differentiable Dropout. The design enhances domain-specific texture extraction while preserving domain-agnostic structure through expert modules and complementary bottlenecks, and it adaptively routes tokens with ASG and DAR. To stabilize supervision on unlabeled streams and restrain confirmation bias, we further add DA-OPD, an EMA‑anchored on‑policy reverse‑KL distillation with an augmentation policy conditioned on entropy and confidence. Together, MoASE and DA-OPD form MoASE++, which achieves consistent state-of-the-art performance on CIFAR-10/100-C, ImageNet-C, and Cityscapes→ACDC, and delivers a stronger robustness–plasticity balance under continual-shift scenarios.


\clearpage

\bibliographystyle{IEEEtran}
\bibliography{main.bib}

\newpage

\vspace{11pt}

\begin{IEEEbiography}[{\includegraphics[width=1in,height=1.25in,clip,keepaspectratio]{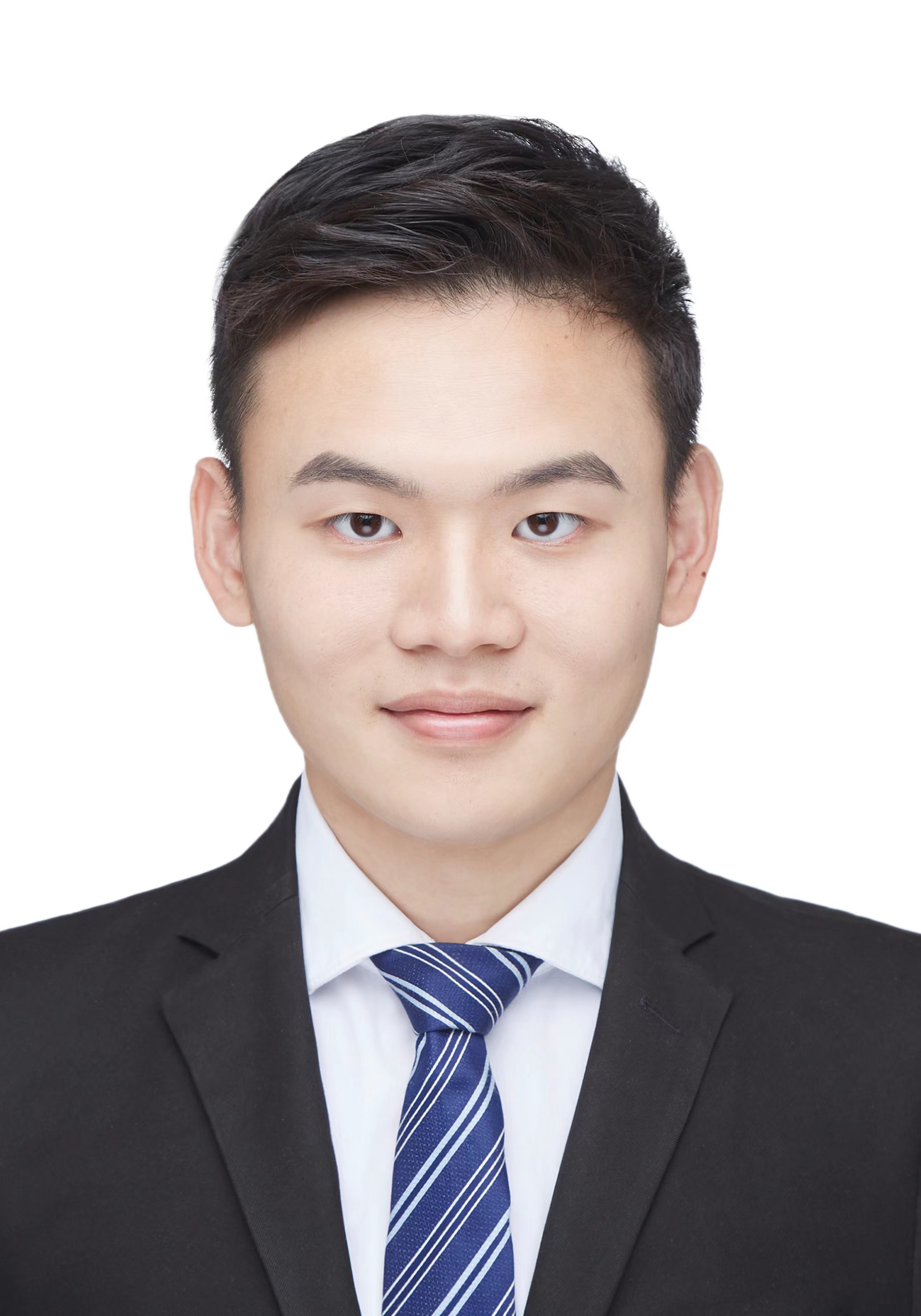}}]{Rongyu Zhang}
is a dual Ph.D. student at Nanjing University and The Hong Kong Polytechnic University. He received an M.Phil. degree at The Chinese University of Hong Kong, Shenzhen in 2023. He also received both a B.E. and a B.M. from the joint program between Beijing University of Posts and Telecommunications and Queen Mary University of London in 2021. His research interests focus on the generalization and efficient multimodal learning.

\end{IEEEbiography}

\vskip -2\baselineskip plus -1fil

\begin{IEEEbiography}[{\includegraphics[width=1in,height=1.25in,clip,keepaspectratio]{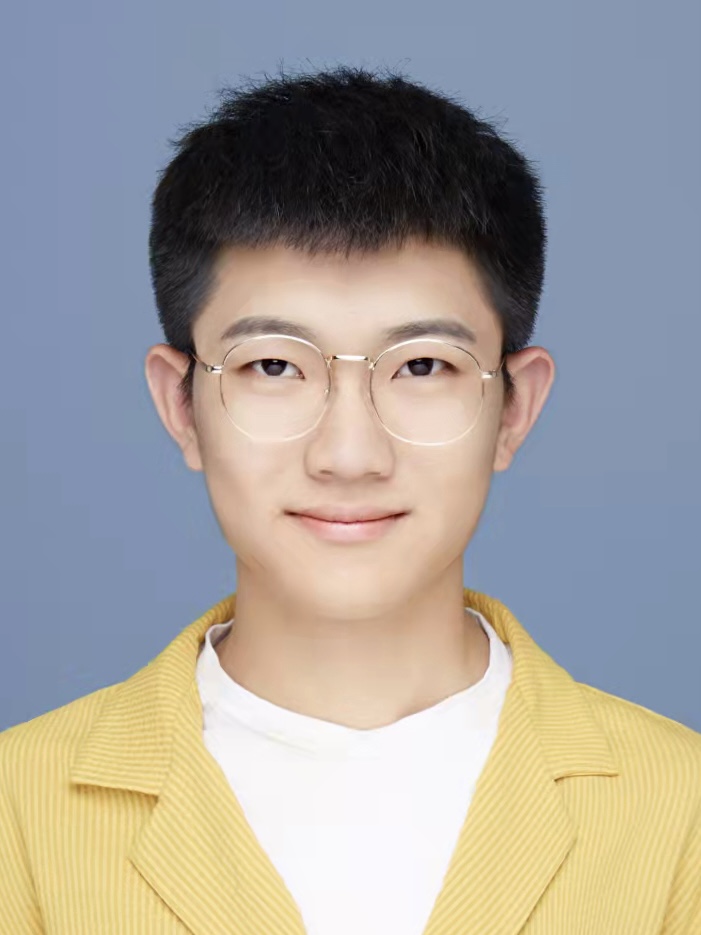}}]{Aosong Cheng}
 received his B.S. degrees in Software Engineering and Finance from Nankai University, China. He received his M.S. degree in Software Engineering from the School of Software and Microelectronics, Peking University. He is currently a Ph.D. student at the School of Artificial Intelligence, Shanghai Jiao Tong University (SJTU), China. His research interests include multimodal large models and embodied intelligence.

\end{IEEEbiography}

\vskip -2\baselineskip plus -1fil

\begin{IEEEbiography}[{\includegraphics[width=1in,height=1.25in,clip,keepaspectratio]{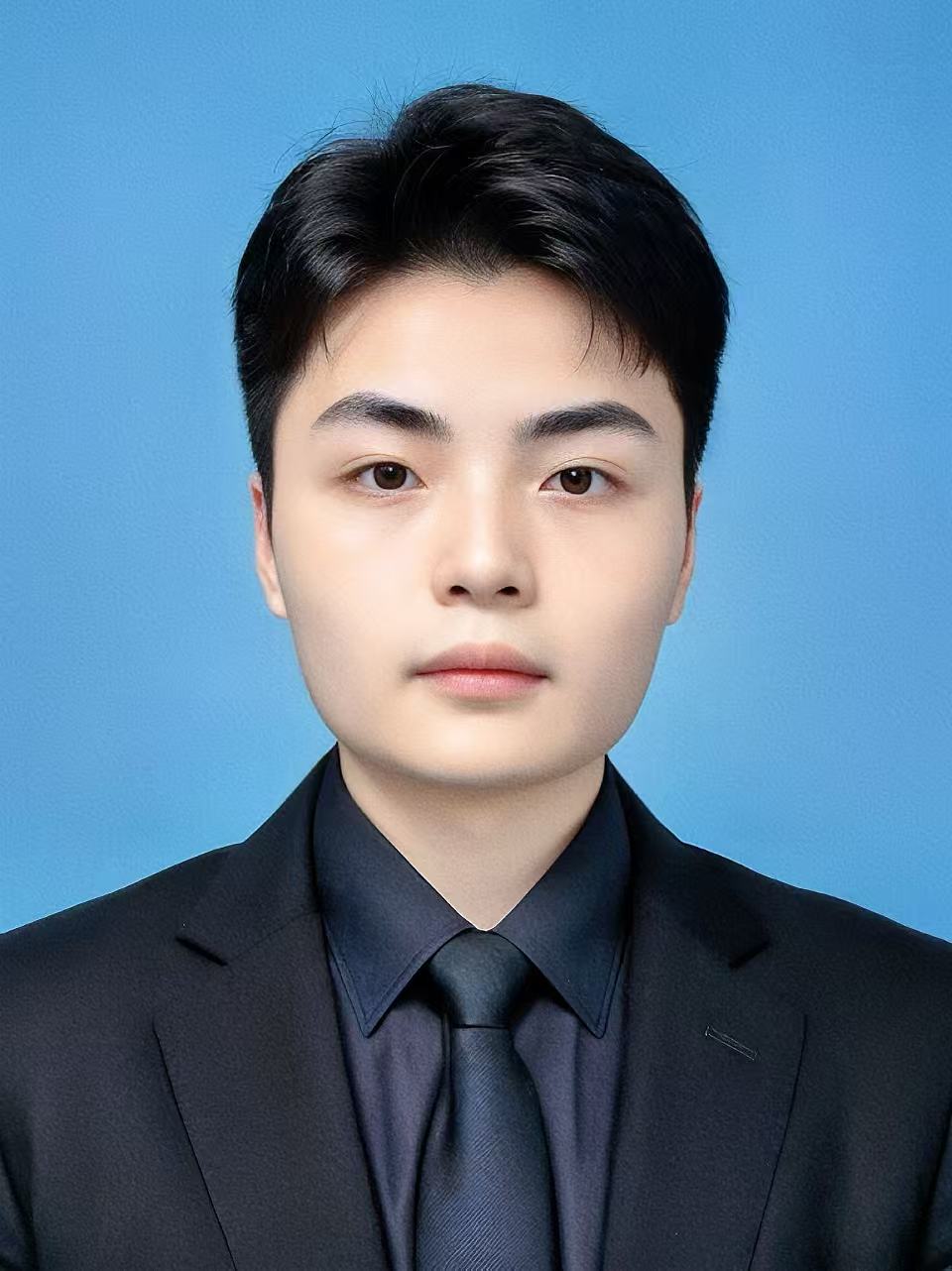}}]{Gaole Dai}
received his B.S. degree of Biology with distinction from University of Toronto (UofT), Canada in 2022. He is currently a Ph.D. student in Computer Science and Technology at Peking University (PKU), China. His research interests focus on Representation Learning, Neuroscience-inspired AI (NeuroAI), and AI for Life Science.

\end{IEEEbiography}

\vskip -2\baselineskip plus -1fil

\begin{IEEEbiography}[{\includegraphics[width=1in,height=1.25in,clip,keepaspectratio]{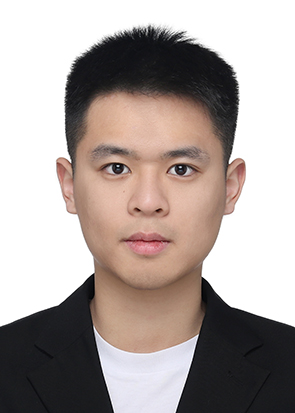}}]{Yulin Luo}
received the B.S. degree in Automation from Shanghai Jiao Tong University (SJTU), China in 2023. He is currently a Ph.D. student in Computer Science and Technology at Peking University (PKU), China. His research interests focus on Embodied AI, Multimodal Large Language Models, and Data-Centric AI.
\end{IEEEbiography}

\vskip -2\baselineskip plus -1fil

\begin{IEEEbiography}[{\includegraphics[width=1in,height=1.15in,clip,keepaspectratio]{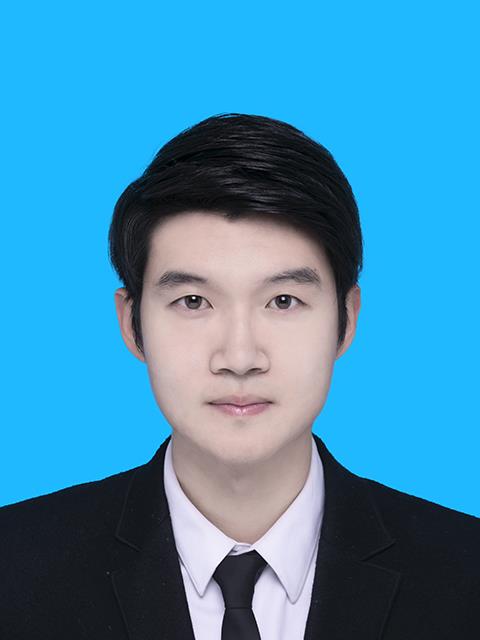}}]{Jiaming Liu}
 received the B.S. and M.S. degrees in Information and Communication Engineering from Beijing University of Posts and Telecommunications, China, in 2019 and 2022, receptively. He is currently pursuing his Ph.D. in Computer Science and Technology at Peking University. His research interests include out-of-distribution generalization, embodied AI, and multimodal scene understanding. \end{IEEEbiography}

\vskip -2\baselineskip plus -1fil

\begin{IEEEbiography}[{\includegraphics[width=1in,height=1.1in,clip,keepaspectratio]{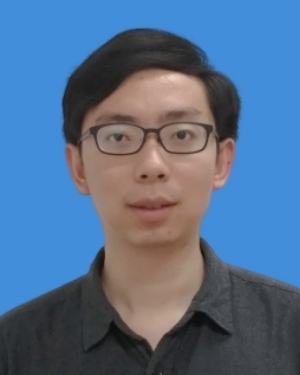}}]{Li Du}
(M’16) received his B.S degree from Southeast University, China, and his Ph.D. degree in Electrical Engineering from the University of California, Los Angeles. He is currently an associate professor in the Department of Electrical Science and Engineering at Nanjing University. His research includes analog sensing circuit design, in-memory computing design, and a high-performance AI processor for edge sensing.
\end{IEEEbiography}

\vskip -2\baselineskip plus -1fil

\begin{IEEEbiography}[{\includegraphics[width=1in,height=1.1in,clip,keepaspectratio]{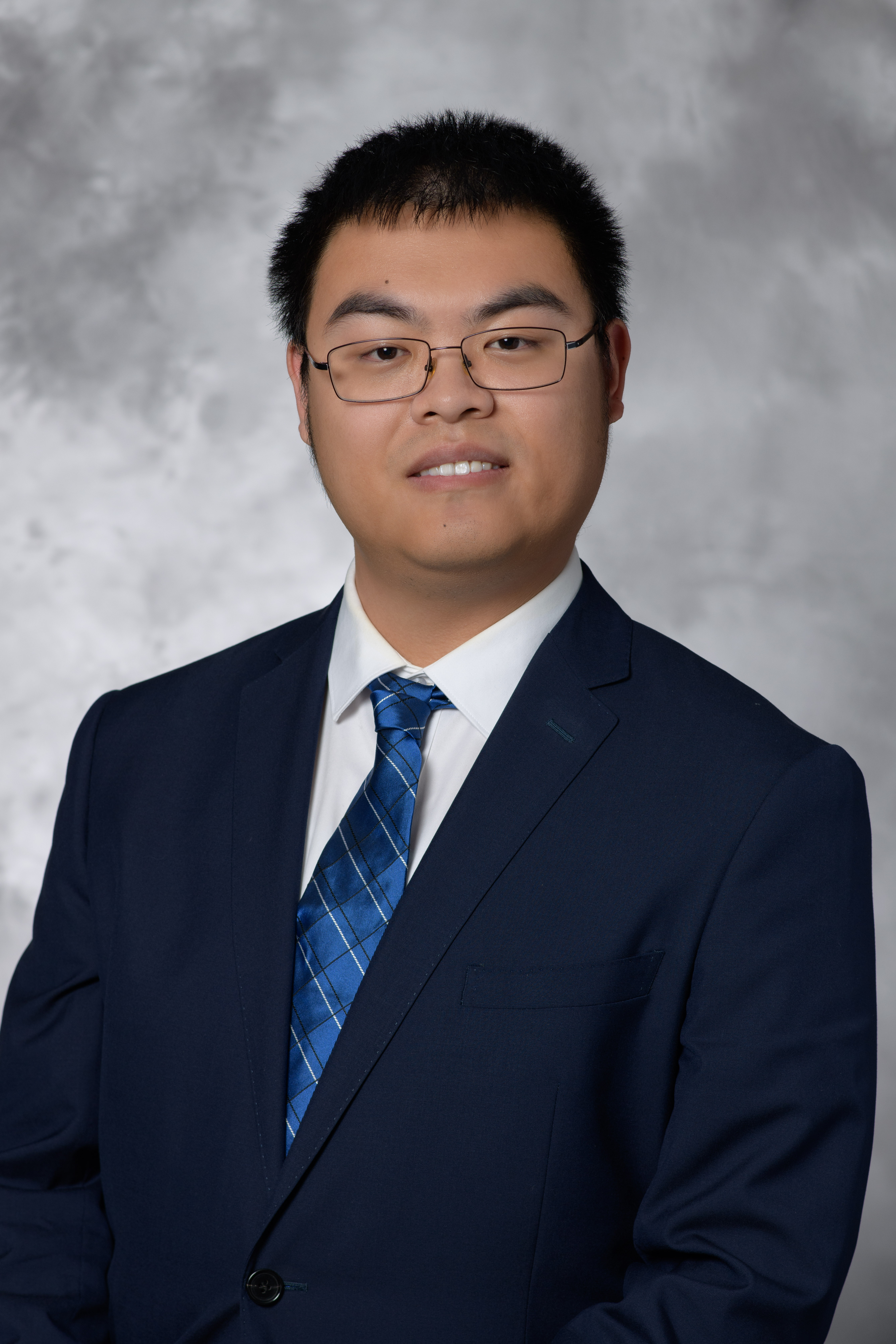}}]{Huanrui Yang} 
is an Assistant Professor in the Department of Electrical and Computer Engineering at the University of Arizona. He received his B.E degree from Tsinghua University, China, and his Ph.D. degree from Duke University. His research interest lies in the efficiency and reliability of deep learning algorithms and systems, with a recent focus on large foundation models, diffusion model, and multi-agent system.
\end{IEEEbiography}

\vskip -2\baselineskip plus -1fil

\begin{IEEEbiography}[{\includegraphics[width=1in,height=1.15in,clip,keepaspectratio]{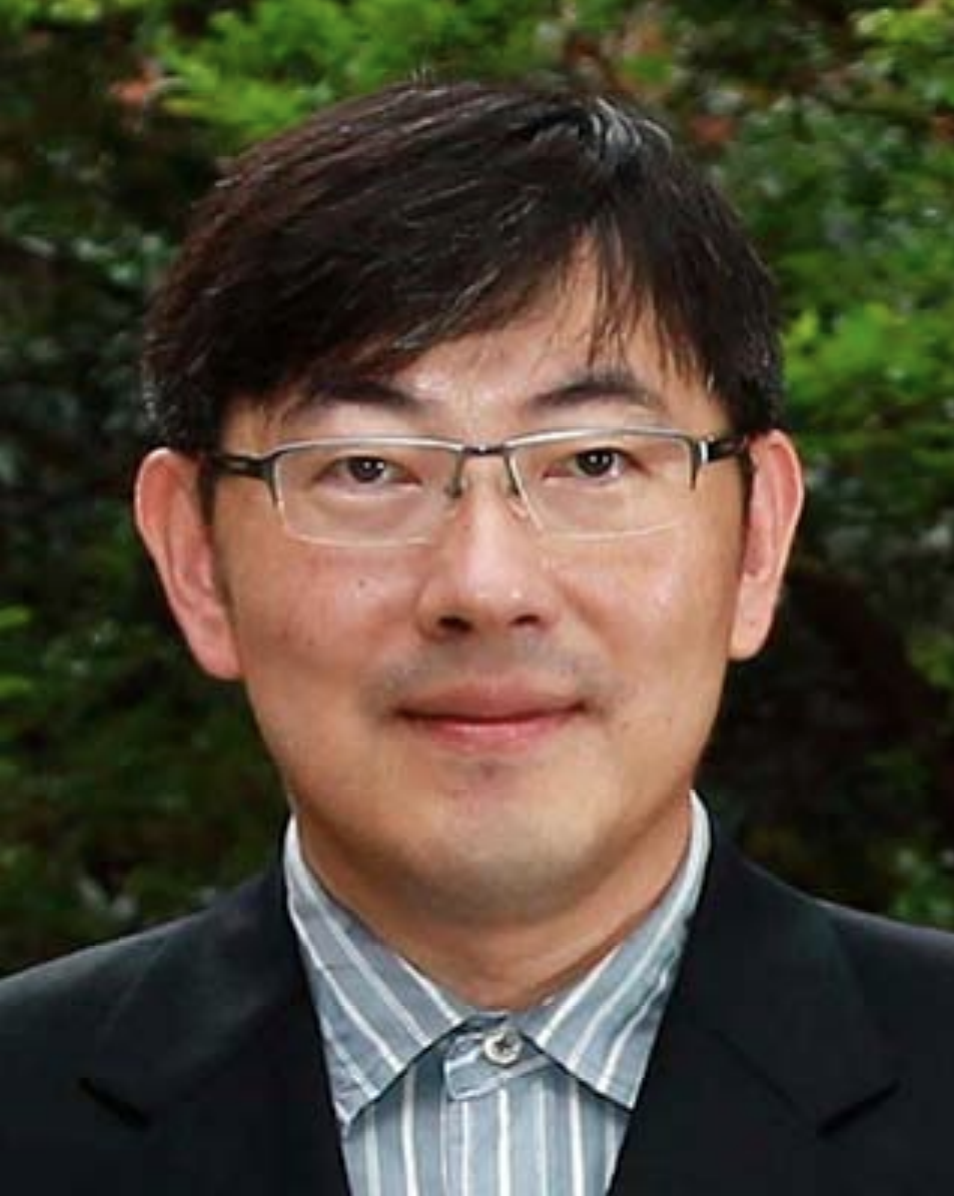}}]{Dan Wang}
is a Professor in the Department of Computing at The Hong Kong Polytechnic University. He received the Ph.D. degree from Simon Fraser University. His research falls in general computer networking and systems, where he has published in ACM SIGCOMM, ACM SIGMETRICS, the IEEE INFOCOM, and many others. He is the Steering Committee Chair of IEEE/ACM IWQoS. His research interests include network architecture and QoS, smart building, and Industry 4.0.
\end{IEEEbiography}

\vskip -2\baselineskip plus -1fil

\begin{IEEEbiography}[{\includegraphics[width=1in,height=1.25in,clip,keepaspectratio]{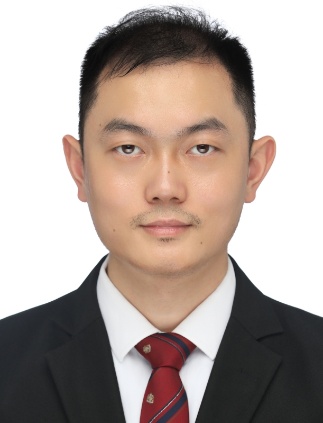}}]{Leyuan Fang} is a full professor at Hunan University. He currently serves as the Chair of the Technical Committee on Image Application and System Integration of the China Society of Image and Graphics (CSIG). He has been recognized as a Highly Cited Researcher by Clarivate Analytics and a Highly Cited Chinese Researcher by Elsevier. His research focuses on remote sensing image analysis and intelligent information processing. He has published over 200 papers in leading journals and conferences, including IEEE TPAMI, IJCV, and IEEE TIP, with more than 20,000 citations on Google Scholar. He has received multiple prestigious awards, including the 2023 IEEE GRSS Highest Impact Paper Award, and the National Natural Science Award of China and the Wu Wenjun Artificial Intelligence Science and Technology Award, and has led several major national research projects.
\end{IEEEbiography}

\vskip -2\baselineskip plus -1fil

\begin{IEEEbiography}[{\includegraphics[width=1in,height=1.1in,clip,keepaspectratio]{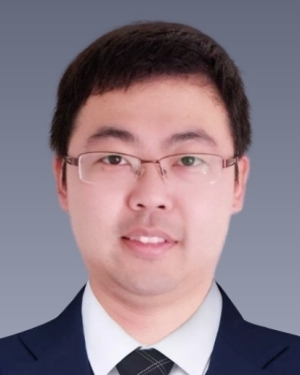}}]{Yuan Du}
(S’14-M’17-SM’21) received his B.S. degree from Southeast University (SEU), Nanjing, China, in 2009; his M.S. and Ph.D. degrees from the Electrical Engineering Department, University of California, Los Angeles (UCLA), in 2012 and 2016, respectively. Since 2019, he has been at Nanjing University in Nanjing, China, as an Associate Professor. His current research interests include designs of machine-learning hardware accelerators and RFICs.
\end{IEEEbiography}

\vskip -2\baselineskip plus -1fil

\begin{IEEEbiography}[{\includegraphics[width=1in,height=1.25in,clip,keepaspectratio]{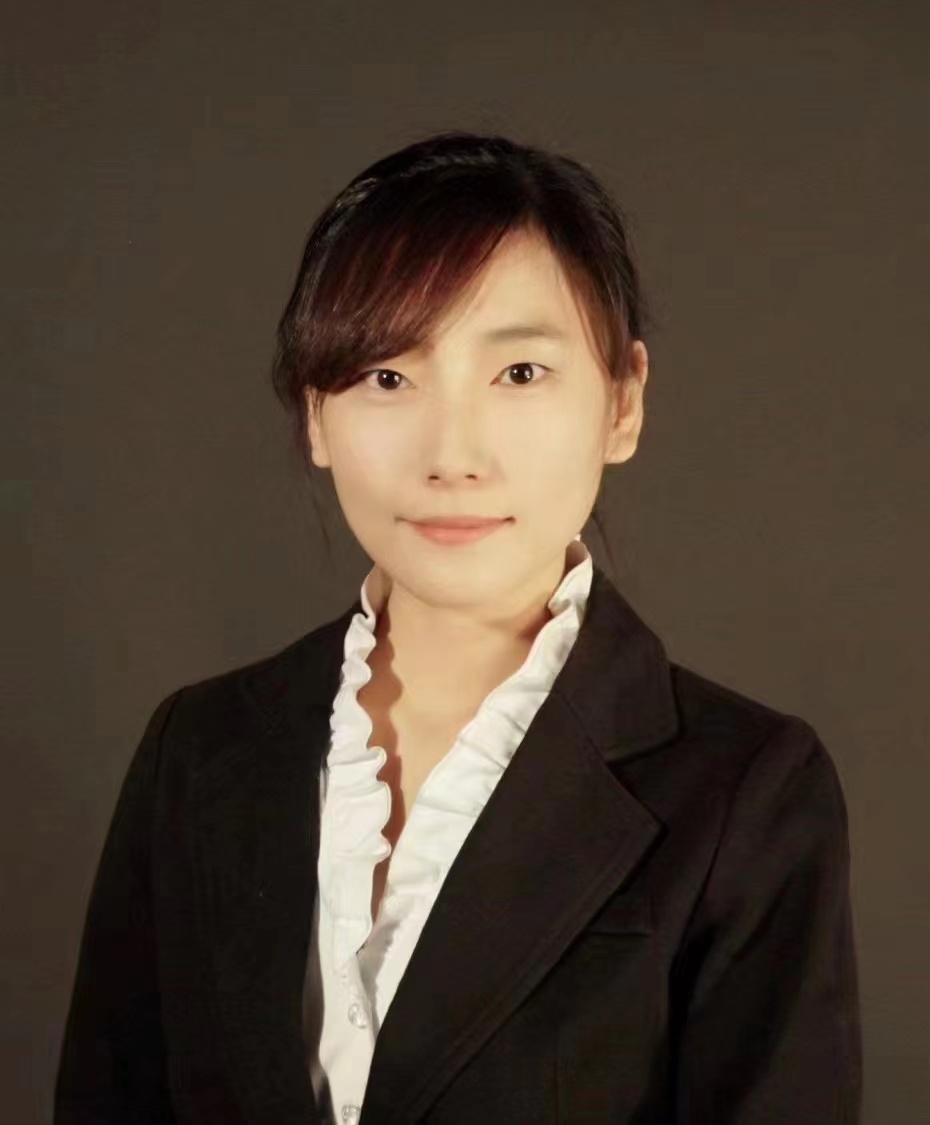}}]{Shanghang Zhang} received her Ph.D. from Carnegie Mellon University in 2018 and conducted postdoctoral research at the University of California, Berkeley. She is an Assistant Professor in the School of Computer Science at Peking University. She has published over 100 papers in top artificial intelligence journals and conferences. Her works have been cited more than 20,000 times on Google Scholar. She was honored with the Best Paper Award at the AAAI'2021. In 2018, she was recognized as an "EECS Rising Star" in the United States. She has organized workshops at top international conferences such as NeurIPS and ICML and served as a senior program committee member for AAAI 2022, 2023, and 2024.
\end{IEEEbiography}


\end{document}